\documentclass{article}

\usepackage{microtype}
\usepackage{graphicx}
\usepackage{subcaption}
\usepackage{booktabs} 

\usepackage{hyperref}

\usepackage[preprint]{icml2026}

\usepackage{amsmath}
\usepackage{amssymb}
\usepackage{mathtools}
\usepackage{amsthm}

\usepackage[capitalize,noabbrev]{cleveref}

\usepackage{float}
\usepackage{cancel}

\usepackage{algorithm}
\usepackage{algorithmic}
\usepackage{amsmath}

\usepackage[algo2e]{algorithm2e}
\usepackage{hyperref}

\theoremstyle{plain}
\newtheorem{theorem}{Theorem}[section]

\theoremstyle{definition}
\newtheorem{definition}[theorem]{Definition}

\theoremstyle{remark}

\usepackage[textsize=tiny]{todonotes}

\usepackage{amsmath,amssymb,amsfonts,bm}
\usepackage{mathtools}

\usepackage{graphicx}
\usepackage{subcaption}

\usepackage{tikz} 

\usetikzlibrary{arrows.meta,positioning,calc}

\newcommand{\va}{\boldsymbol{a}}
\newcommand{\mA}{\boldsymbol{A}}
\newcommand{\vb}{\boldsymbol{b}}

\newcommand{\mC}{\boldsymbol{C}}

\newcommand{\mD}{\boldsymbol{D}}

\newcommand{\vg}{\boldsymbol{g}}
\newcommand{\mH}{\boldsymbol{H}}

\newcommand{\mI}{\boldsymbol{I}}

\newcommand{\mJ}{\boldsymbol{J}}

\newcommand{\mL}{\boldsymbol{L}}

\newcommand{\mP}{\boldsymbol{P}}

\newcommand{\vu}{\boldsymbol{u}}

\newcommand{\mW}{\boldsymbol{W}}

\newcommand{\vx}{\boldsymbol{x}}

\newcommand{\vy}{\boldsymbol{y}}

\newcommand{\vz}{\boldsymbol{z}}

\newcommand{\norm}[1]{\left\lVert #1 \right\rVert_2}

\newcommand{\vgamma}{\boldsymbol{\gamma}}

\newcommand{\mPsi}{\boldsymbol{\Psi}}

\icmltitlerunning{SimpleGPT: Improving GPT via A Simple Normalization Strategy}

\begin{document}

\twocolumn[
  \icmltitle{SimpleGPT: Improving GPT via A Simple Normalization Strategy}



  \icmlsetsymbol{equal}{*}
  \icmlsetsymbol{comp}{$\dagger$}

  \begin{icmlauthorlist}
    \icmlauthor{Marco Chen}{yyy,equal}
    \icmlauthor{Xianbiao Qi}{xxx,equal,comp}
    \icmlauthor{Yelin He}{xxx}
    \icmlauthor{Jiaquan Ye}{xxx}
    \icmlauthor{Rong Xiao}{xxx}

  \end{icmlauthorlist}

  \icmlaffiliation{yyy}{Tsinghua University}
  \icmlaffiliation{xxx}{Intellifusion Inc.}

  \icmlcorrespondingauthor{Xianbiao Qi}{qixianbiao@gmail.com}

  \icmlkeywords{Machine Learning, ICML}

  \vskip 0.3in
]



\printAffiliationsAndNotice{\icmlEqualContribution}

\begin{abstract}
    In this work, we revisit Transformer optimization through the lens of second-order geometry and establish a direct connection between architectural design, activation scale, the Hessian matrix, and the maximum tolerable learning rate. We introduce a simple normalization strategy, termed SimpleNorm, which stabilizes intermediate activation scales by construction. Then, by analyzing the Hessian of the loss with respect to network activations, we theoretically show that SimpleNorm significantly reduces the spectral norm of the Hessian, thereby permitting larger stable learning rates. We validate our theoretical findings through extensive experiments on large GPT models at parameter scales 1B, 1.4B, 7B and 8B. Empirically, SimpleGPT, our SimpleNorm-based network, tolerates learning rates 3$\times$-10$\times$ larger than standard convention, consistently demonstrates strong optimization stability, and achieves substantially better performance than well-established baselines. Specifically, when training 7B-scale models for 60K steps, SimpleGPT achieves a training loss that is 0.08 lower than that of LLaMA2 with QKNorm, reducing the loss from 2.290 to 2.208. Our source code will be released at \url{https://github.com/Ocram7/SimpleGPT}.
\end{abstract}

\section{Introduction}
Transformer-based large language models (LLMs)~\cite{gpt1_radford2018improving,gpt2_radford2019language, gpt3_brown2020language, llama1_touvron2023llama, llama2_touvron2023llama, llama3_dubey2024llama, palm_chowdhery2023palm, deepseek_v3_liu2024deepseek, qwen_team2023qwen} have achieved state-of-the-art performance across a wide range of tasks. As these models scale in depth and width, optimization stability increasingly constrains performance and scalability. Many architectural components in modern Transformers—such as residual connections~\cite{resnet_he2016deep}, normalization layers~\cite{batch_normalization_ioffe2015batch, layernorm_ba2016layer, rmsnorm_zhang2019root, modular_norm_large2024scalable}, and nonlinear activations~\cite{swishglu_shazeer2020glu}—are primarily introduced to stabilize training, and in some cases to increase expressivity. Understanding optimization from a principled perspective is therefore central to the design of scalable Transformer architectures.

Classical optimization theory~\cite{nesterov1983method, nesterov1998introductory, nocedal1999numerical,boyd_boyd2004convex} provides a precise connection between optimization stability and second-order geometry. For a twice-differentiable objective \( \ell(\vx) \), the local curvature is characterized by the Hessian
$\mH_{\vx\vx} = \nabla^2 \ell(\vx).$
If \( \ell \) is \(\beta\)-smooth, then
$
\|\nabla \ell(\vx) - \nabla \ell(\vy)\|_2 \le \beta \|\vx - \vy\|_2, \forall {\vx, \vy}.
$
Standard results imply that gradient descent is stable only when \textbf{the maximum tolerable learning rate} $\eta$ satisfies
\[
\eta \le \frac{2}{\beta} = \frac{2}{\sup_{\vx}\,\|\mH_{\vx\vx}\|_2},
\]
establishing the Hessian spectral norm as the fundamental quantity governing admissible learning rates and convergence behavior.

In contrast, much of the recent literature on Transformer optimization focuses on architectural heuristics without explicitly analyzing their relationship with classical optimization theory. Techniques such as normalization placement~\cite{transformer_vaswani2017attention, prenorm_wang2019learning, qk_norm_henry2020query, lipsformer_qilipsformer, stable_transformer_qi2025stabletransformer, dnt_qi2025dnt}, residual scaling~\cite{resnet_he2016deep, rezero_bachlechner2021rezero,mhc_xie2025mhc}, or modified nonlinearities~\cite{gelu_hendrycks2016gaussian, swishglu_shazeer2020glu} are typically justified empirically, while their impact on activation scale, Hessian geometry, and thereby optimal learning rates remains implicit. As a result, the theoretical relationship between network design and classical stability conditions is not well understood, despite its relevance to training very deep and large-scale models.

In this work, we bridge this gap by analyzing Transformer architectures through the central lens of Hessian-based optimization theory, while accounting for the role of activation scale. We introduce a simple normalization strategy, termed \emph{SimpleNorm}, which stabilizes intermediate activation scales through normalization \emph{immediately} following linear mappings. Building on this structural property, we analyze the Hessian of the loss with respect to network activations and show that SimpleNorm significantly reduces the spectral norm of the Hessian, thereby permitting substantially larger stable learning rates. By grounding Transformer design in classical optimization principles~\cite{nesterov1983method,nesterov1998introductory,nestorov_nesterov2013introductory}, our framework provides a unified explanation for existing stabilization techniques and offers principled guidance for building scalable and stable models.

Our contributions can be summarized as follows:
\begin{itemize}
    \item We revisit Transformer optimization through the lens of second-order geometry and establish a direct connection between architectural design, activation scale, the Hessian, and the maximum tolerable learning rate.
    \item We introduce SimpleGPT, a new GPT architecture based on SimpleNorm, and theoretically show that this design significantly reduces $\|\mH_{\vx\vx}\|_2$, yielding a smaller Lipschitz gradient constant and enabling substantially larger stable learning rates.
    \item We demonstrate experimentally that these theoretical advantages are accompanied by consistent empirical gains across \texttt{nanoGPT}, \texttt{LLaMA2}, and \texttt{LLaMA3} architectures, for model sizes ranging from 1B to 8B parameters. Specifically, when training 7B-scale models for 60K steps, our method achieves a training loss that is 0.08 lower than that of LLaMA2 with QKNorm, reducing the loss from 2.290 to 2.208.
\end{itemize}

\section{Related Work}\label{sec:related_works}
\paragraph{Normalization methods.}
Normalization has long been a central tool for stabilizing optimization and improving convergence in deep networks. Batch Normalization (BN)~\cite{batch_normalization_ioffe2015batch} normalizes activations using mini-batch statistics and has been widely successful in convolutional architectures, but its behavior can depend on batch size and distributed synchronization. Layer Normalization (LN)~\cite{layernorm_ba2016layer} and its variants remove batch dependence by computing statistics across features within each sample and have become the standard in Transformers. Related methods such as Instance Normalization (IN)~\cite{instance_norm_ulyanov2016instance}, Group Normalization (GN)~\cite{group_norm_wu2018group}, RMSNorm~\cite{rmsnorm_zhang2019root}, and nGPT~\cite{ngpt_loshchilov2024ngpt} further tailor normalization to specific architectural or efficiency constraints.

\textbf{Normalization Placement in Transformers.}
Beyond the choice of normalization operator, its \emph{placement} within Transformer blocks plays a critical role in optimization stability. The original Transformer architecture adopted post-normalization (PostNorm), in which normalization follows residual addition~\cite{transformer_vaswani2017attention}. Subsequent large-scale practice shifted toward pre-normalization (PreNorm), placing normalization before attention and MLP sublayers to improve trainability in deep networks~\cite{prenorm_wang2019learning}.

Recent work further systematizes normalization placement and explores additional insertion points. Deeply Normalized Transformer (DNT)~\cite{dnt_qi2025dnt} categorizes multiple strategies—including InputNorm, PreNorm, MidNorm, PostNorm, and QKNorm—and motivates them through a Jacobian- and gradient-stability analysis. DNT ultimately combines InputNorm, PreNorm, MidNorm, and QKNorm, while avoiding PostNorm due to its potential training instabilities. Among these placements, QK normalization (QKNorm)~\cite{qk_norm_henry2020query} specifically targets the attention mechanism, stabilizing the geometry of query–key interactions and mitigating softmax saturation.

By treating normalization as a design space over both operator and location, these works emphasize that stability and conditioning can be targeted at specific architectural subcomponents, rather than only at block outputs. As model depth increases, normalization also interacts with residual pathways and initialization. DeepNorm~\cite{deepnorm_wang2022deepnet}, for example, modifies residual scaling and initialization to bound parameter updates and control dynamical growth with depth, complementing normalization-placement strategies.

\textbf{Normalization-free Transformers.}
Motivated by the cost/complexity of normalization and the desire for simpler training dynamics, recent work questions whether explicit normalization is necessary in Transformers.
\emph{Transformers without Normalization} shows that replacing normalization layers with a simple point-wise nonlinearity, {Dynamic Tanh (DyT)}~\cite{dyt_zhu2025transformers}, can match normalized baselines across tasks, suggesting that an appropriate bounded nonlinearity can provide much of the stability typically attributed to LN/RMSNorm. 
Building on this, \emph{Stronger Normalization-Free Transformers}~\cite{stronger_norm_chen2025stronger} studies the design of point-wise functions more broadly and reports improved normalization-free performance via a searched function family (e.g., \(\mathrm{Derf}\)), outperforming LN/RMSNorm/DyT across multiple domains. Despite being framed as normalization-free, these approaches fundamentally operate by controlling the norm of activations through bounded transformations, and can therefore be viewed as a form of implicit normalization.

\textbf{Positioning of our work.}
While our method can be viewed as a study of normalization placement in Transformers, its key distinction lies in explicitly linking architectural design to second-order optimization geometry. Rather than motivating normalization heuristically or empirically, we analyze how local normalization immediately following linear mappings stabilizes activation scale and, in turn, constrains the spectral norm of the Hessian and leads to a smoother optimization landscape. This perspective yields a principled characterization of the maximum tolerable learning rate and provides a unified theoretical explanation for optimization stability in large Transformer models.
\section{Preliminaries}
We consider the unconstrained convex optimization problem
$\min_{\vx \in \mathbb{R}^d} f(\vx),$
where \(f:\mathbb{R}^d \to \mathbb{R}\) is \emph{differentiable}.  

\subsection{Convex and Smoothed Optimization}

\textbf{Lipschitz gradient smoothness.}
If \(f\) is twice differentiable, its second-order Taylor expansion at point $\vx$ is
\begin{align*}
    f(\vy)
\approx f(\vx)
&+ \langle \nabla f(\vx), \vy - \vx \rangle \\
&+ \frac{1}{2} (\vy - \vx)^\top \nabla^2 f(\vx) (\vy - \vx).
\end{align*}
The second-order term captures the local curvature.

\begin{definition}[$\beta$-smoothness]
The function \(f\) is said to be \(\beta\)-smooth if
\[
\|\nabla f(\vy)-\nabla f(\vx)\|_2 \le \beta \|\vy-\vx\|_2,
\forall \vy,\vx.
\]
\end{definition}

For convex and differentiable functions, \(\beta\)-smoothness is equivalent to the following quadratic upper bound:
\[
f(\vy) \le f(\vx) + \langle \nabla f(\vx), \vy-\vx \rangle
+ \frac{\beta}{2}\|\vy-\vx\|_2^2,
 \forall \vy,\vx .
\]
This inequality plays a central role in step-size (learning rate) selection for optimization methods.

\textbf{Gradient descent and learning rate.}
Consider the standard gradient descent iteration
\[
\vx_{k+1} = \vx_k - \eta \nabla f(\vx_k),
\]
where \(\eta>0\) is the learning rate. We wish to understand how the choice of \(\eta\) depends on the smoothness constant \(\beta\), and how this choice affects convergence.

\textbf{Descent condition.}
Evaluating the quadratic upper bound with
$\vy = \vx_{k+1} = \vx_k - \eta \nabla f(\vx_k)$ and $\vx = \vx_{k}$, we obtain
\[
f(\vx_{k+1})
\le f(\vx_k)
- \eta \|\nabla f(\vx_k)\|_2^2
+ \frac{\beta \eta^2}{2}\|\nabla f(\vx_k)\|_2^2 .
\]
Rearranging terms gives
\[
f(\vx_{k+1})
\le f(\vx_k)
- \left(\eta - \frac{\beta \eta^2}{2}\right)
\|\nabla f(\vx_k)\|_2^2 .
\]

A sufficient condition for monotone decrease of the objective is therefore
\begin{equation}
    0 < \eta \le \frac{2}{\beta}.
\end{equation}
This bound characterizes the \emph{stability region} of gradient descent for convex \(\beta\)-smooth functions. $\frac{2}{\beta}$ is usually known as the maximum tolerable learning
rate.

For convex $\beta$-smooth problems, among all fixed learning rates that ensure descent, the canonical choice is typically $\eta = \frac{1}{\beta}$. This choice balances progress and stability and leads to the sharpest worst-case guarantees.

\subsection{Gradient and Hessian of a Linear Projection}
Given linear projection $\vy=\mW\vx$ and loss function $\ell$, the gradient and Hessian of $\ell$ with respect to $\vy$ are,
\[
\vg_{\vy} := {\frac{\partial \ell}{\partial \vy}}^{\top},
\qquad
\mH_{\vy\vy} := \frac{\partial^2 \ell}{\partial \vy \partial \vy^\top}.
\]
The Jacobian of $\vy$ with respect to $\vx$ is
$\mJ_{\vx}^{\vy}  = \frac{\partial \vy}{\partial \vx} = \mW.$
Hence, according to the chain rule, we have
\[
\vg_{\vx} = {\frac{\partial \ell}{\partial \vx}}^{\top}
=
{\mJ_{\vx}^{\vy}}^\top \vg_{\vy}
=
\mW^\top \vg_{\vy},
\]
and the Hessian matrix with respect to $\vx$ is
\begin{equation}
    \mH_{\vx\vx} = \frac{\partial^2 \ell}{\partial \vx \partial \vx^\top}
=
{\mJ_{\vx}^{\vy}}^\top \mH_{\vy\vy} \mJ_{\vx}^{\vy}
=
\mW^\top \mH_{\vy\vy} \mW.
\end{equation}

\section{Methodology}

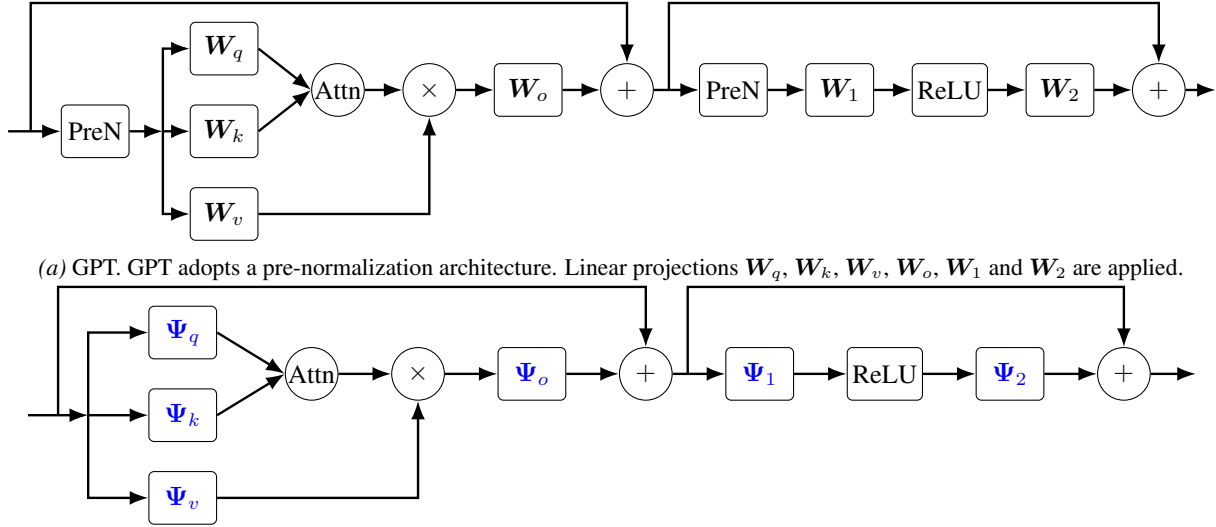
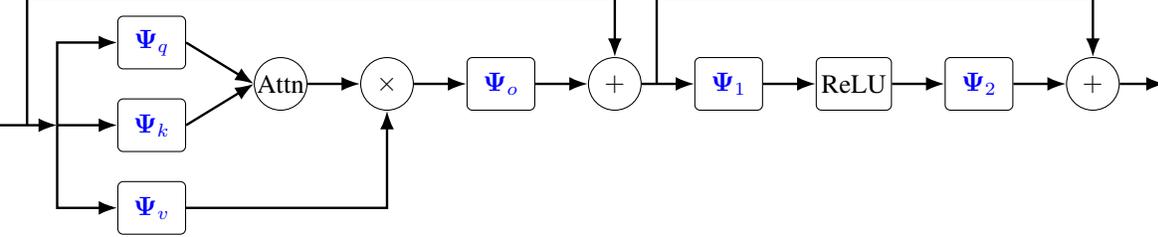
\begin{figure*}[ht]
\tiny
     \centering
     	\begin{subfigure}[b]{\textwidth}
        \centering

            \begin{tikzpicture}[
                font=\normalsize,
                >=Latex,
                node distance=10mm and 14mm,
                block/.style={draw, rounded corners=2pt, minimum height=7mm, minimum width=9mm, align=center},
                smallblock/.style={draw, rounded corners=2pt, minimum height=7mm, minimum width=9mm, align=center},
                circ/.style={draw, circle, minimum size=7mm, inner sep=0pt},
                plus/.style={circ},
                mult/.style={circ},
                line/.style={-Latex, line width=0.9pt},
            ]
            
            
            \node[block] (pren1) {PreN};

            \node[smallblock, right=8mm of pren1, yshift=11mm] (wq) {$\mW_q$};
            
            \node[smallblock, right=8mm of pren1]     (wk) {$\mW_k$};
            \node[smallblock, right=8mm of pren1, yshift=-11mm]    (wv) {$\mW_v$};
            
            \node[circ, right=7mm of wk, yshift=5.5mm] (att) {Attn};
            
            \node[mult, right=5mm of att] (mul) {$\times$};
            
            \node[smallblock, right=5mm of mul] (wo) {$\mW_o$};
            
            \node[plus, right=5mm of wo] (add1) {$+$};
            
            \draw[line] (pren1) -- (9mm,0) coordinate (split1);
            \draw[line] (split1) |- (wq.west);
            \draw[line] (split1) |- (wk.west);
            \draw[line] (split1) |- (wv.west);
            
            \draw[line] (wq.east) -- (att.west);
            \draw[line] (wk.east) -- (att.west);
            
            \draw[line] (att.east) -- (mul.west);
            
            \draw[line] (wv.east) -| ($(mul.south)+(0,-6mm)$) -- (mul.south);
            \draw[line] (mul.east) -- (wo.west);
            \draw[line] (wo.east) -- (add1.west);
            

            \draw[line] (pren1.west) ++(-7mm,0) coordinate (x0) -- (pren1.west);
            \draw[line] (x0)+(3mm,0) |- ($(add1.north)+(0,8mm)$) -- (add1.north);

            \node[block, right=6mm of add1] (pren2) {PreN};
            \node[smallblock, right=5mm of pren2] (w1) {$\mW_1$};
            \node[block, right=5mm of w1] (relu) {ReLU};
            \node[smallblock, right=5mm of relu] (w2) {$\mW_2$};
            \node[plus, right=5mm of w2] (add2) {$+$};
            
            \draw[line] (add1.east)  -- (pren2.west);
            \draw[line] (pren2.east) -- (w1.west);
            \draw[line] (w1.east) -- (relu.west);
            \draw[line] (relu.east) -- (w2.west);
            \draw[line] (w2.east) -- (add2.west);
            \draw[line] (add2.east) -- ++(4mm,0);
            
            \draw[line] (add1.east)+(2mm,0) |- ($(add2.north)+(0,8mm)$) -- (add2.north);

            \end{tikzpicture}

        \caption{GPT. GPT adopts a pre-normalization architecture. Linear projections $\mW_q$, $\mW_k$,  $\mW_v$, $\mW_o$, $\mW_1$ and $\mW_2$ are applied. }
    \end{subfigure}

        \begin{subfigure}[b]{\textwidth}
            \centering
                \begin{tikzpicture}[
        font=\normalsize,
        >=Latex,
        node distance=10mm and 14mm,
        block/.style={draw, rounded corners=2pt, minimum height=7mm, minimum width=9mm, align=center},
        myblock/.style={draw, rounded corners=2pt, minimum height=7mm, text=blue, minimum width=9mm, align=center},
        smallblock/.style={draw, rounded corners=2pt, minimum height=7mm, minimum width=9mm, align=center},
        circ/.style={draw, circle, minimum size=7mm, inner sep=0pt},
        plus/.style={circ},
        mult/.style={circ},
        line/.style={-Latex, line width=0.9pt},
    ]
        
        \coordinate (x0) at (0,0);
        \draw[line] (-8mm,0) -- (x0);
        
        \node[myblock, right=8mm of x0, yshift=11mm] (wq) {$\mPsi_q$};
        \node[myblock, right=8mm of x0]            (wk) {$\mPsi_k$};
        \node[myblock, right=8mm of x0, yshift=-11mm] (wv) {$\mPsi_v$};
        
        \node[circ, right=9mm of wk, yshift=5.5mm] (att) {Attn};
        \node[mult, right=7mm of att] (mul) {$\times$};
        \node[myblock, right=7mm of mul] (wo) {$\mPsi_o$};
        \node[plus, right=7mm of wo] (add1) {$+$};
        
        \draw[line] (x0) |- (wq.west);
        \draw[line] (x0) |- (wk.west);
        \draw[line] (x0) |- (wv.west);
        
        \draw[line] (wq.east) -- (att.west);
        \draw[line] (wk.east) -- (att.west);
        
        \draw[line] (att.east) -- (mul.west);
        
        \draw[line] (wv.east) -| ($(mul.south)+(0,-6mm)$) -- (mul.south);
        
        \draw[line] (mul.east) -- (wo.west);
        \draw[line] (wo.east) -- (add1.west);
        
        \draw[line] ($(x0)+(-4mm,0)$) |- ($(add1.north)+(0,8mm)$) -- (add1.north);
        
        \node[myblock, right=7mm of add1] (w1) {$\mPsi_1$};
        \node[block, right=7mm of w1] (relu) {ReLU};
        \node[myblock, right=7mm of relu] (w2) {$\mPsi_2$};
        \node[plus, right=7mm of w2] (add2) {$+$};
        
        \draw[line] (add1.east) -- (w1.west);
        \draw[line] (w1.east) -- (relu.west);
        \draw[line] (relu.east) -- (w2.west);
        \draw[line] (w2.east) -- (add2.west);
        \draw[line] (add2.east) -- ++(6mm,0);
        
        \draw[line] ($(add1.east)+(2mm,0)$) |- ($(add2.north)+(0,8mm)$) -- (add2.north);
        
        \end{tikzpicture}

        \caption{SimpleGPT. SimpleGPT replaces all linear layers with \textbf{SimpleNorm} operator, denoted by $\color{blue}{\mPsi}$. In SimpleGPT, we do not use prenorm.}
    \end{subfigure}
    \caption{SimpleGPT vs. GPT. This figure compares the standard GPT block with the proposed SimpleGPT block, highlighting the structural simplifications introduced by SimpleNorm.}
    \label{fig:simplegpt}
\end{figure*}

\subsection{SimpleNorm: A Unified Normalization Strategy}
\textbf{Definition of SimpleNorm.}
We define \emph{SimpleNorm} as placing a normalization operator \emph{immediately} after a linear mapping.
Given an input vector $\vx \in \mathbb{R}^m$ and a linear transformation $\mW \in \mathbb{R}^{d \times m}$, we abstract SimpleNorm as a primitive operator
\begin{equation}
    \mPsi(\vx) = 
    \operatorname{Norm}({\mW\vx}),
\qquad
\end{equation}
where $\operatorname{Norm}(\cdot)$ is a normalization operator such as LayerNorm or RMSNorm. SimpleNorm is motivated by a simple yet effective \emph{placement} strategy, rather than algebraic complexity. In contrast to existing normalization techniques that typically operate at the level of residual blocks,
hidden states, or parameter reparameterization,
SimpleNorm enforces normalization \emph{locally and immediately} after linear mapping, treating ``linear mapping immediately followed by a normalization'' as a single, unified operator.

\textbf{Definition of SimpleGPT.} As illustrated in \autoref{fig:simplegpt}, \emph{SimpleGPT} uses
SimpleNorm  as a fundamental building block.
SimpleNorm is systematically inserted wherever a linear layer appears,
including MLP projections, attention projections (Q, K, V),
output projections, and gating or memory-related modules. Take \autoref{fig:simplegpt} as an example, normalization is inserted after the $\mW_q$, $\mW_k$, $\mW_v$, $\mW_o$, $\mW_1$, and $\mW_2$ projections. In architectures that employ SwiGLU~\cite{swishglu_shazeer2020glu} instead of MLP, SimpleGPT inserts normalization after $\mW_q$, $\mW_k$, $\mW_v$, $\mW_o$, $\mW_1$, $\mW_2$, and $\mW_3$.

\paragraph{Instantiating SimpleNorm with RMSNorm} In this work, we instantiate $\operatorname{Norm(\cdot)}$ with RMSNorm~\citep{rmsnorm_zhang2019root}. Hence, SimpleNorm is now defined as: 
\begin{equation}
    \mPsi(\vx; \mW, \vgamma) = \vgamma \odot \sqrt{d}\,
\frac{\mW\vx}{\norm{\mW\vx}},
\qquad
\end{equation}
where $\vx \in \mathbb{R}^m$, $\mW \in \mathbb{R}^{d\times m}$, and $\mW, \vgamma$ are learnable parameters, and $\odot$ denotes element-wise multiply.

For later analysis, we define the intermediate variables
\[
\begin{aligned}
\vz &= \mW\vx, &
s &= \|\vz\|_2, &
\vu &= \frac{\vz}{s},\\
\mP &= \mI-\vu\vu^{\top}, &
\mD &= \operatorname{Diag}(\vgamma),
\end{aligned}
\]

so that $\mPsi(\vx; \mW, \vgamma)=\sqrt d\,\mD\vu$.

\paragraph{Core Properties of SimpleNorm} The following  sections prove two important mechanisms of SimpleNorm: SimpleNorm directly stabilizes the scale of activations to be on the order of $\sqrt{d}$, and SimpleNorm constrains the spectral norm of the Hessian of the loss w.r.t. the activations, smoothing the loss landscape and enabling larger learning rates. Moreover, we present a hypothesis for why, in addition to the predicted optimization stability, SimpleNorm also exhibits strong empirical performance.

\subsection{Mechanism I: Stable Activation Scale}
By construction, SimpleNorm stabilizes the scale of intermediate activations by normalizing \emph{immediately} after each linear mapping. Recall that
\[
\mPsi(\vx;\mW,\vgamma)=\sqrt d\,\mD\vu \qquad \text{ with } \|\vu\|_2=1.
\]
Then
\[
\|\mPsi(\vx;\mW,\vgamma)\|_2=\sqrt d\,\|\mD\vu\|_2.
\]
In particular, letting $\gamma_{\min}=\min_i |\gamma_i|$ and $\gamma_{\max}=\max_i |\gamma_i|$, we have the bound
\[
\gamma_{\min}\sqrt d \;\le\; \|\mPsi(\vx)\|_2 \;\le\; \gamma_{\max}\sqrt d.
\]
Thus, up to the learned per-dimension scaling $\vgamma$, each projection is rescaled to have norm on the order of $\sqrt d$. Consequently, SimpleNorm prevents intermediate representation norms from drifting with depth or weight growth, eliminating a common source of activation explosion. Mechanism II shows how, in addition to activations, SimpleNorm also stabilizes curvature via the gradient Lipschitz constant.

\subsection{Mechanism II: Smoother Loss Landscape}
\textbf{Smoothness and the Hessian.}
The smoothness of the objective directly constrains optimization stability: larger curvature implies smaller safe learning rates. Given a twice-differentiable $\beta$-smooth objective $\ell(x)$, we quantify local curvature by the activation Hessian $\mH_{\vx\vx}=\nabla^2_{\vx\vx}\ell$. Specifically, the supremum of the spectral norm of the Hessian upper bounds local curvature and governs gradient stability:
\[
\beta \; = \; \sup_{\vx}\|\mH_{\vx\vx}(\vx)\|_2.
\]
To show SimpleNorm yields a smoother landscape, we prove two results: (i) the SimpleNorm Hessian decomposes as $\mH_{\vx\vx}=\mL+\mC$ and in high dimension $\|\mC\|_2\ll \|\mL\|_2$; (ii) compared to a linear projection whose curvature scales as $\|\mW\|_2^2$, the SimpleNorm curvature is scale-invariant with respect to $\|\mW\|_2$. Combined, these imply $\|\mH^{\mathrm{sn}}_{\vx\vx}\|_2 \ll \|\mH^{\mathrm{lin}}_{\vx\vx}\|_2$ since $\|\mW\|_2$ generally grows during training.

\textbf{SimpleNorm Derivatives.}
First, we compute the first-order gradient and second-order Hessian of $\ell$ with respect to $\vx$.

Given $\vy=\sqrt{d}\,\mD\vu$, let
\begin{equation*}
\ell=\ell(\vy),
\qquad
\vg_{\vy}:=\nabla_{\vy}\ell,
\qquad
\mH_{\vy\vy}:=\nabla^2_{\vy\vy}\ell.
\end{equation*}

\emph{First-order derivative.}
The Jacobian of the normalization  satisfies
$
\frac{\partial \vu}{\partial \vz} = \frac{1}{s}\mP,$
which yields the Jacobian of $\vy$ with respect to $\vx$:
\[
\mJ_{\vx}^{\vy}:=\frac{\partial \vy}{\partial \vx}
=
\frac{\sqrt{d}}{s}\,\mD\,\mP\,\mW.
\]
Applying the chain rule, the gradient is
\begin{equation}\label{eq:first_order_simplenorm}
\nabla_{\vx}\ell
=
{\mJ_{\vx}^{\vy}}^{\top}\vg_{\vy}
=
\frac{\sqrt{d}}{s}\,
\mW^{\top}\mP\,\mD\,\vg_{\vy}.
\end{equation}

\emph{Second-order derivative.}
Differentiating the gradient leads to the standard decomposition
\begin{equation}\label{eq:second_order_simplenorm}
\mH_{\vx\vx}
=
\nabla^2_{\vx}\ell
=
\underbrace{{\mJ_{\vx}^{\vy}}^\top \mH_{\vy\vy}\,\mJ_{\vx}^{\vy}}_{\text{Gauss--Newton  term}}
\;+\;
\underbrace{\mC}_{\text{curvature term}},
\end{equation}
where the first term is the Gauss--Newton component
\begin{equation}\label{eq:linear_term}
{\mJ_{\vx}^{\vy}}^{\top}\mH_{\vy\vy}\mJ_{\vx}^{\vy}
=
\frac{d}{s^2}\,
\mW^{\top}\mP\,\mD\,\mH_{\vy\vy}\,\mD\,\mP\,\mW,
\end{equation}
and the second term is from the curvature of the normalization,
\begin{small}
   \begin{equation}\label{eq:curvature_term}
    \mC =
-\frac{\sqrt{d}}{s^{2}}\,
\mW^{\top}
\Bigl(
\mP \mD\vg_{\vy} \vu^{\top} 
+ \vu^{\top}\mD\vg_{\vy}\mP 
+ \vu \vg_{\vy}^{\top}\mD\mP
\Bigr)\mW.
\end{equation}   
\end{small}
Please see \autoref{appendix:derivation_of_x_simplenorm} for a detailed derivation of $\nabla_{\vx}\ell$ and $\nabla_{\vx}^{2}\ell$ for SimpleNorm. For completeness, derivations of $\nabla_{\vgamma}\ell$, $\nabla_{\vgamma}^{2}\ell$, $\nabla_{\mW}\ell$ and $\nabla_{\operatorname{vec}(\mW)}^{2}\ell$ with $\vy = \vgamma \odot \sqrt{d}\, \frac{\mW\vx}{\|\mW\vx\|_2}$ and $\nabla_{\mW}\ell$ and $\nabla_{\operatorname{vec}(\mW)}^{2}\ell$ with $\vy = \mW \vx$ are also provided in \autoref{appendix:derivation_of_W_simplenorm} and \autoref{appendix:derivation_of_W_linear}

\textbf{Gauss--Newton Term Dominates in High Dimension.}
Next, we show that under standard high-dimensional and non-pathological conditions,
the Gauss--Newton term dominates the curvature induced by normalization.

\begin{theorem}[Gauss--Newton dominance for SimpleNorm]
\label{theorem:theorem_1}
Let $\mH_{\vx\vx}=\nabla^2_{\vx}\ell$ denote the activation Hessian induced by
SimpleNorm for a twice-differentiable objective $\ell(\vy)$. Then, the Hessian decomposes as
\[
\mH_{\vx\vx}=\mL+\mC,
\qquad
\mL=(\mJ_{\vx}^{\vy})^\top \mH_{\vy\vy}\mJ_{\vx}^{\vy},
\]
where $\mC$ is the curvature term induced by normalization.

Assume $\|\vx\|_2=\sqrt d$, $\mD = \mI$, $\mW\in\mathbb{R}^{d\times d}$ has high effective rank
$\|\mW\|_F^2/\|\mW\|_2^2\ge c\,d$, and the input and loss derivatives are not
pathologically aligned with $\mW$. Define
\[
\kappa:=\left\|\frac{\sqrt d\,\mW}{\|\mW\vx\|_2}\right\|_2 .
\]
Then $\kappa=\Theta(1)$ with high probability, and there exists a constant
$\tau=\Theta(1)$ such that
\[
\|\mL\|_2=\tau\,\kappa^2\,\|\mH_{\vy\vy}\|_2,
\qquad
\|\mC\|_2\le \frac{3\kappa^2}{\sqrt d}\,\|\vg_{\vy}\|_2 .
\]

In particular, if $\|\vg_{\vy}\|_2=O(\|\mH_{\vy\vy}\|_2)$, then
\[
\|\mC\|_2\ll \|\mL\|_2
\]
so the Gauss--Newton term dominates the Hessian w.h.p.
\end{theorem}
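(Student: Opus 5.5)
The proof splits into four steps: the decomposition, the size of $\kappa$, the bound on $\mC$, and the formula for $\mL$.

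\textbf{Decomposition.} This is already given by \eqref{eq:second_order_simplenorm}–\eqref{eq:curvature_term}, so the work is in bounding the two pieces. Set $\mD=\mI$ and $s=\|\mW\vx\|_2$. Then $\mJ_{\vx}^{\vy}=\frac{\sqrt d}{s}\mP\mW$. Every factor of $\frac{\sqrt d}{s}\mW$ has spectral norm exactly $\kappa$. Both $\mL$ and $\mC$ will be written in terms of this rescaled matrix, so that $\kappa$ carries all the dependence on the scale of $\mW$.

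\textbf{Step 1: $\kappa=\Theta(1)$.} We have $\kappa=\sqrt d\,\|\mW\|_2/\|\mW\vx\|_2$. Under the non-alignment assumption, treat $\vx$ as generic with $\|\vx\|_2=\sqrt d$, e.g.\ uniformly distributed on the sphere of that radius. Then
\[
\mathbb{E}\|\mW\vx\|_2^2=\|\mW\|_F^2 .
\]
A Hanson–Wright or Lipschitz-concentration argument on the sphere gives $\|\mW\vx\|_2^2=(1\pm o(1))\|\mW\|_F^2$ w.h.p. This uses the effective-rank condition, since the fluctuation scale $\|\mW^\top\mW\|_F\le\|\mW\|_2\|\mW\|_F$ is small relative to $\|\mW\|_F^2$ when $\|\mW\|_F^2\ge c\,d\,\|\mW\|_2^2$. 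It follows that $\kappa^2\approx d\|\mW\|_2^2/\|\mW\|_F^2\in[1,1/c]$, so $\kappa=\Theta(1)$.

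\textbf{Step 2: bound on $\mC$.} Factor
\[
\mC=-\frac{1}{\sqrt d}\Big(\frac{\sqrt d}{s}\mW\Big)^\top\mM\Big(\frac{\sqrt d}{s}\mW\Big),
\qquad
\mM=\mP\vg_{\vy}\vu^\top+(\vu^\top\vg_{\vy})\mP+\vu\vg_{\vy}^\top\mP .
\]
Using $\|\mP\|_2\le1$ and $\|\vu\|_2=1$, each of the three terms has spectral norm at most $\|\vg_{\vy}\|_2$. By the triangle inequality and submultiplicativity,
\[
\|\mC\|_2\le\frac{3\kappa^2}{\sqrt d}\|\vg_{\vy}\|_2 .
\]
This step is routine.

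\textbf{Step 3: formula for $\mL$.} Write
\[
\mL=\Big(\tfrac{\sqrt d}{s}\mW\Big)^\top\mP\mH_{\vy\vy}\mP\Big(\tfrac{\sqrt d}{s}\mW\Big),
\]
and define $\tau:=\|\mL\|_2/(\kappa^2\|\mH_{\vy\vy}\|_2)$. The upper bound $\tau\le1$ is immediate. The content of the claim is the lower bound $\tau=\Omega(1)$. This is the main obstacle, because a sandwich $\mA^\top\mB\mA$ can in principle have norm far below $\|\mA\|_2^2\|\mB\|_2$.

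I would take the top eigenvector $\vb$ of $\mH_{\vy\vy}$ as the test direction. The projection $\mP$ removes only the single direction $\vu$. Non-alignment of the loss derivatives means $\vb$ is not concentrated on $\vu$, so $\|\mP\vb\|_2=\Theta(1)$.

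Next, choose the test vector $\mathbf{v}=\mW^\top\mP\vb/\|\mW^\top\mP\vb\|_2$ and lower bound $\mathbf{v}^\top\mL\mathbf{v}$. Non-alignment of $\vb$ with the singular structure of $\mW$ gives $\|\mW^\top\mP\vb\|_2^2\gtrsim\|\mW\|_F^2/d\ge c\|\mW\|_2^2$, so $\vb$ captures a constant fraction of $\|\mW\|_2$. This yields $\|\mL\|_2\ge c'\kappa^2\|\mH_{\vy\vy}\|_2$ w.h.p., up to controlling the cross terms from the other eigendirections of $\mH_{\vy\vy}$. If that control fails, the fallback is to state the non-alignment assumption directly as $\tau=\Theta(1)$.

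\textbf{Conclusion.} Combining the steps gives
\[
\|\mC\|_2/\|\mL\|_2\le\frac{3}{\tau\sqrt d}\cdot\frac{\|\vg_{\vy}\|_2}{\|\mH_{\vy\vy}\|_2}=O(d^{-1/2}),
\]
under the assumption $\|\vg_{\vy}\|_2=O(\|\mH_{\vy\vy}\|_2)$, so the Gauss--Newton term dominates.
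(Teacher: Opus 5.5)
Your proposal is correct and follows the paper's proof essentially step for step. It uses the same isotropic-input concentration argument for $\kappa$ (yours is slightly more careful, correctly getting $\kappa^2\in[1,1/c]$ from the one-sided effective-rank condition), the same triangle-inequality and submultiplicativity bound on $\mC$, and the same ratio conclusion. For $\mL$, the paper does not attempt your test-vector lower bound: it defines $\tau=\|\widetilde{\mW}^\top\mP\mH_{\vy\vy}\mP\widetilde{\mW}\|_2/\|\mH_{\vy\vy}\|_2$ with $\widetilde{\mW}=\mW/\|\mW\|_2$ and takes $\tau=\Theta(1)$ as the content of the non-pathological alignment assumption, which is exactly your fallback. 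Your test-vector argument does close when $\mH_{\vy\vy}$ is semidefinite, because then the other eigendirections cannot cancel the leading one, but it does not close in general.
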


A complete proof is given in \autoref{appendix:proof_theorem_1}.

\textbf{SimpleNorm Hessian is Weight Scale-Invariant.}
Finally, we compare the SimpleNorm Hessian's magnitude to that of a plain linear projection. We show that linear curvature grows quadratically with the weight matrix spectral norm $\|\mW\|_2$, whereas SimpleNorm removes this dependence.

\begin{theorem}[Linear curvature scales with $\|\mW\|_2^2$ while SimpleNorm does not]
\label{theorem:theorem_2}
Let $\ell=\ell(\vy)$ be twice differentiable, with
$\mH_{yy}=\nabla^2_{\vy\vy}\ell$ and $\vg_{\vy}=\nabla_{\vy}\ell$.

Consider the linear mapping with its Hessian
\[
\vy_1=\mW_1\vx,\qquad
\mH^{\mathrm{lin}}_{xx}=\mW_1^\top \mH_{yy}\mW_1,
\]
and the SimpleNorm mapping with its Hessian
\[
\vy_2=\mD \frac{\sqrt{d}\,\mW_2\vx}{\|\mW_2\vx\|_2},\qquad 
\mH^{\mathrm{sn}}_{xx}=\mL+\mC .
\]
Assume $\mH_{y_1y_1} = \mH_{y_2y_2} := \mH_{yy}$, $\mW_1 = \mW_2 := \mW$, and that the conditions of \autoref{theorem:theorem_1} hold, such that $\|\mL\|_2 \gg \|\mC\|_2$. Then, with high-probability,
\[
\|\mH^{\mathrm{sn}}_{\vx\vx}\|_2
=
\Theta\!\left(\kappa^2\,\|\mH_{\vy\vy}\|_2\right),
\qquad
\kappa^2=\frac{d}{\|\widetilde{\mW}\vx\|_2^2}=\Theta(1),
\]
where $\widetilde{\mW}=\mW/\|\mW\|_2$.

Moreover, if the range of $\widetilde{\mW}$ is not adversarially aligned with the leading eigenspace of $\mH_{\vy\vy}$, then there
exists a constant $c_{\mathrm{lin}}=\Theta(1)$ such that
\[
\|\mH^{\mathrm{lin}}_{\vx\vx}\|_2
=
\|\mW^\top \mH_{\vy\vy}\mW\|_2
\;\ge\;
c_{\mathrm{lin}}\,\|\mW\|_2^2\,\|\mH_{\vy\vy}\|_2.
\]

Consequently, as $\|\mW\|_2$ grows during training,
\[
\|\mH^{\mathrm{lin}}_{\vx\vx}\|_2
\;\gg\;
\|\mH^{\mathrm{sn}}_{\vx\vx}\|_2
\qquad\text{(with high probability)}.
\]
\end{theorem}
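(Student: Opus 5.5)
The plan is to handle the two Hessians separately and then compare their scaling in $\|\mW\|_2$.

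\emph{SimpleNorm side.} By hypothesis the conditions of \autoref{theorem:theorem_1} hold, so $\mH^{\mathrm{sn}}_{\vx\vx}=\mL+\mC$ with $\|\mC\|_2\ll\|\mL\|_2$. The triangle inequality then gives
\[
(1-o(1))\|\mL\|_2\le\|\mH^{\mathrm{sn}}_{\vx\vx}\|_2\le(1+o(1))\|\mL\|_2 .
\]
\autoref{theorem:theorem_1} also gives $\|\mL\|_2=\tau\kappa^2\|\mH_{\vy\vy}\|_2$ with $\tau=\Theta(1)$, which yields $\|\mH^{\mathrm{sn}}_{\vx\vx}\|_2=\Theta(\kappa^2\|\mH_{\vy\vy}\|_2)$.

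Next I rewrite $\kappa$ in the stated form. Since
\[
\kappa=\sqrt d\,\|\mW\|_2/\|\mW\vx\|_2 ,
\]
dividing numerator and denominator by $\|\mW\|_2$ gives $\kappa^2=d/\|\widetilde{\mW}\vx\|_2^2$. This form makes scale invariance explicit: replacing $\mW$ by $c\mW$ leaves $\widetilde{\mW}$, and hence $\kappa$, unchanged. The same holds for $\mL$, because $\mJ_{\vx}^{\vy}=\tfrac{\sqrt d}{s}\mD\mP\mW$ is $0$-homogeneous in $\mW$ ($s=\|\mW\vx\|_2$ scales like $\mW$ and $\mP$ depends only on $\vu$). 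The claim $\kappa=\Theta(1)$ w.h.p.\ is imported from \autoref{theorem:theorem_1}. The intuition is that, with effective rank $\ge cd$ and $\vx$ not aligned with $\mW$,
\[
\|\widetilde{\mW}\vx\|_2^2\approx\|\vx\|_2^2\,\|\widetilde{\mW}\|_F^2/d\ge c\,d ,
\]
so $\kappa^2\le 1/c$. The matching lower bound is $\kappa^2\ge 1$, since $\|\widetilde{\mW}\vx\|_2\le\|\vx\|_2=\sqrt d$.

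\emph{Linear side.} Let $\vv$ be a unit top eigenvector of $\mH_{\vy\vy}$. By the variational characterization,
\[
\|\mW^\top\mH_{\vy\vy}\mW\|_2\ge|\va^\top\mW^\top\mH_{\vy\vy}\mW\va|
\]
for every unit $\va$. I will choose $\va$ as the normalized vector $\mW^\top\vv/\|\mW^\top\vv\|_2$, or more simply as the top right singular vector of $\mW$, and expand $\mW\va$ in the eigenbasis of $\mH_{\vy\vy}$. The non-adversarial-alignment assumption is formalized as follows: the range of $\widetilde{\mW}$ contains a unit-ish direction $\widetilde{\mW}\va$ with $\|\widetilde{\mW}\va\|_2=\Theta(1)$ whose Rayleigh quotient under $\mH_{\vy\vy}$ is at least $c_{\mathrm{lin}}\|\mH_{\vy\vy}\|_2$. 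Factoring out $\|\mW\|_2^2$ then gives
\[
\|\mH^{\mathrm{lin}}_{\vx\vx}\|_2\ge\|\mW\|_2^2\,\va^\top\widetilde{\mW}^\top\mH_{\vy\vy}\widetilde{\mW}\va\ge c_{\mathrm{lin}}\|\mW\|_2^2\|\mH_{\vy\vy}\|_2 .
\]

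\emph{Comparison and main obstacle.} Dividing the two estimates gives
\[
\|\mH^{\mathrm{lin}}_{\vx\vx}\|_2/\|\mH^{\mathrm{sn}}_{\vx\vx}\|_2\gtrsim (c_{\mathrm{lin}}/\kappa^2)\,\|\mW\|_2^2 .
\]
Since $\kappa^2=\Theta(1)$ is independent of the scale of $\mW$, this ratio diverges as $\|\mW\|_2$ grows, giving the final claim. The main obstacle is the linear lower bound. If $\mH_{\vy\vy}$ is indefinite, the Rayleigh quotient can cancel, and if its top eigenspace lies nearly orthogonal to the range of $\mW$, no $\Theta(1)$ constant exists. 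I would handle this by stating the non-alignment assumption precisely as
\[
\|\mP_{\mathrm{range}(\widetilde{\mW})}\vv\|_2\ge c ,
\]
together with a spectral-gap or PSD condition on $\mH_{\vy\vy}$ near its top eigenvalue. Under this condition I would choose $\va$ so that $\widetilde{\mW}\va$ is the projection of $\vv$ onto the range, scaled. This requires the relevant singular values of $\widetilde{\mW}$ to be $\Theta(1)$, which the high effective-rank assumption makes plausible but does not fully guarantee. I would also fall back on a random-model heuristic (Gaussian-like $\widetilde{\mW}$) to justify the constant w.h.p.
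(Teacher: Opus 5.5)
Your proposal is correct and follows essentially the same route as the paper's proof. It invokes \autoref{theorem:theorem_1} to replace $\|\mH^{\mathrm{sn}}_{\vx\vx}\|_2$ by $\|\mL\|_2$, factors $\mW=\|\mW\|_2\,\widetilde{\mW}$ to contrast the $\|\mW\|_2^2$ scaling of $\mH^{\mathrm{lin}}_{\vx\vx}$ with the scale invariance of $\mP$, $\kappa$ and $\mL$, treats $c_{\mathrm{lin}}$ as the content of the non-alignment assumption, and divides to get a ratio $\gtrsim c_{\mathrm{lin}}\|\mW\|_2^2/\kappa^2$; your two-sided $\Theta$ bound via $\tau=\Theta(1)$ (the paper's proof only writes the upper bound $\|\mL\|_2\le\kappa^2\|\mH_{\vy\vy}\|_2$) and the elementary bound $\kappa^2\ge 1$ are small but genuine tightenings.
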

Intuitively, SimpleNorm removes the dependence of curvature on weight scale by normalizing activations, whereas a linear projection amplifies curvature as $\|\mW\|_2$ grows. We provide a proof for \autoref{theorem:theorem_2} in  \autoref{appendix:proof_theorem_2}.

\textbf{SimpleNorm Enables Larger Learning Rates.}
For a twice-differentiable $\beta$-smooth objective,
the maximum stable learning rate of gradient descent is inversely proportional to $\beta$, the Lipschitz constant of the gradient, which is equivalent to the supremum of the spectral norm of the Hessian:
$\eta \;\le\; \frac{2}{\beta}$ where $\beta = \sup_{\vx}\|\mH_{\vx\vx}(\vx)\|_2$

\autoref{theorem:theorem_1} and \autoref{theorem:theorem_2} establish that, under standard high-dimensional and non-pathological conditions, the SimpleNorm Hessian is invariant to the spectral norm of the weight matrix whereas the Hessian of a linear projection scales quadratically with the weight norm.

Consequently, since the weight spectral norm generally grows throughout training, the SimpleNorm-based SimpleGPT architecture has a smoother loss landscape that can tolerate significantly larger learning rates compared to methods based on direct linear projections.

\subsection{Interpretation: Beyond Optimization Stability}
We have established two core properties of SimpleNorm:
(i) it stabilizes activation scale at $\Theta(\sqrt{d})$, and
(ii) it smooths the loss landscape by constraining the spectral norm of the
activation Hessian, enabling larger and more stable learning rates.
Although these properties explain the improved optimization stability of SimpleNorm,
they do not fully account for the strong empirical performance observed in \autoref{sec:exp}.

We hypothesize that SimpleNorm provides additional benefits at a more \textit{global} representational level. By normalizing immediately after each linear projection,
SimpleNorm ensures that every layer induces a genuinely nonlinear transformation,
even in regimes where the surrounding network would otherwise behave nearly
linearly. This effectively increases the depth of nonlinear interactions and
enhances expressive capacity without increasing parameter count.

Under this view, SimpleNorm improves performance through a dual effect:
locally, by improving optimization geometry via reduced curvature variability;
and globally, by increasing expressiveness through pervasive
normalization-induced nonlinearity. We believe this combination explains why
SimpleNorm yields consistent empirical gains beyond what would be expected from
learning-rate stability alone.

\subsection{Use \texttt{torch.compile} to Speedup Training}
Normalization layers are memory-bound and frequently executed, making them a potential bottleneck. By fusing reduction and pointwise operations and leveraging \texttt{torch.compile}, SimpleNorm's increased normalization overhead is largely amortized, resulting in around a 3\% training-time increase compared to GPT with QKNorm.

\section{Experiments}\label{sec:exp}
\textbf{Experimental settings.} We evaluate SimpleNorm on three Transformer backbones: nanoGPT, Llama2, and Llama3.
SimpleNorm is applied to all Transformer blocks, excluding the embedding and output
layers. All models are trained using the AdamW optimizer~\cite{adam_kingma2014adam, adamw_IlyaLoshchilov2018FixingWD} with cosine learning-rate
scheduling with bfloat16 precision. Learning rates are tuned for each method.
Since SimpleNorm permits significantly larger stable learning rates, we adjust weight decay accordingly. Additional architectural, hyperparameter, and training details are provided in \autoref{appendix:exp_settings} and \autoref{appendix:model_configs}.

\subsection{Largest Tolerable Learning Rate}
We evaluate the largest tolerable learning rate by comparing optimization stability across different normalization schemes while keeping all other training settings fixed. As shown in \autoref{fig:simplegpt_max_learning_rate}, PreNorm already exhibits convergence issues at a learning rate of $2\times 10^{-3}$.
In contrast, PreNorm+QKNorm remains stable at $2\times 10^{-3}$ and $2\times 10^{-2}$, but becomes unstable when the learning rate is increased to $2\times 10^{-1}$.
SimpleNorm shows stable convergence at both $2\times 10^{-3}$ and $2\times 10^{-2}$, and is notably more stable than PreNorm+QKNorm at $2\times 10^{-1}$.
Overall, these results suggest that SimpleNorm consistently tolerates larger learning rates, indicating improved optimization robustness.

\begin{figure}[ht]
    \centering
    \includegraphics[width=0.95\linewidth]{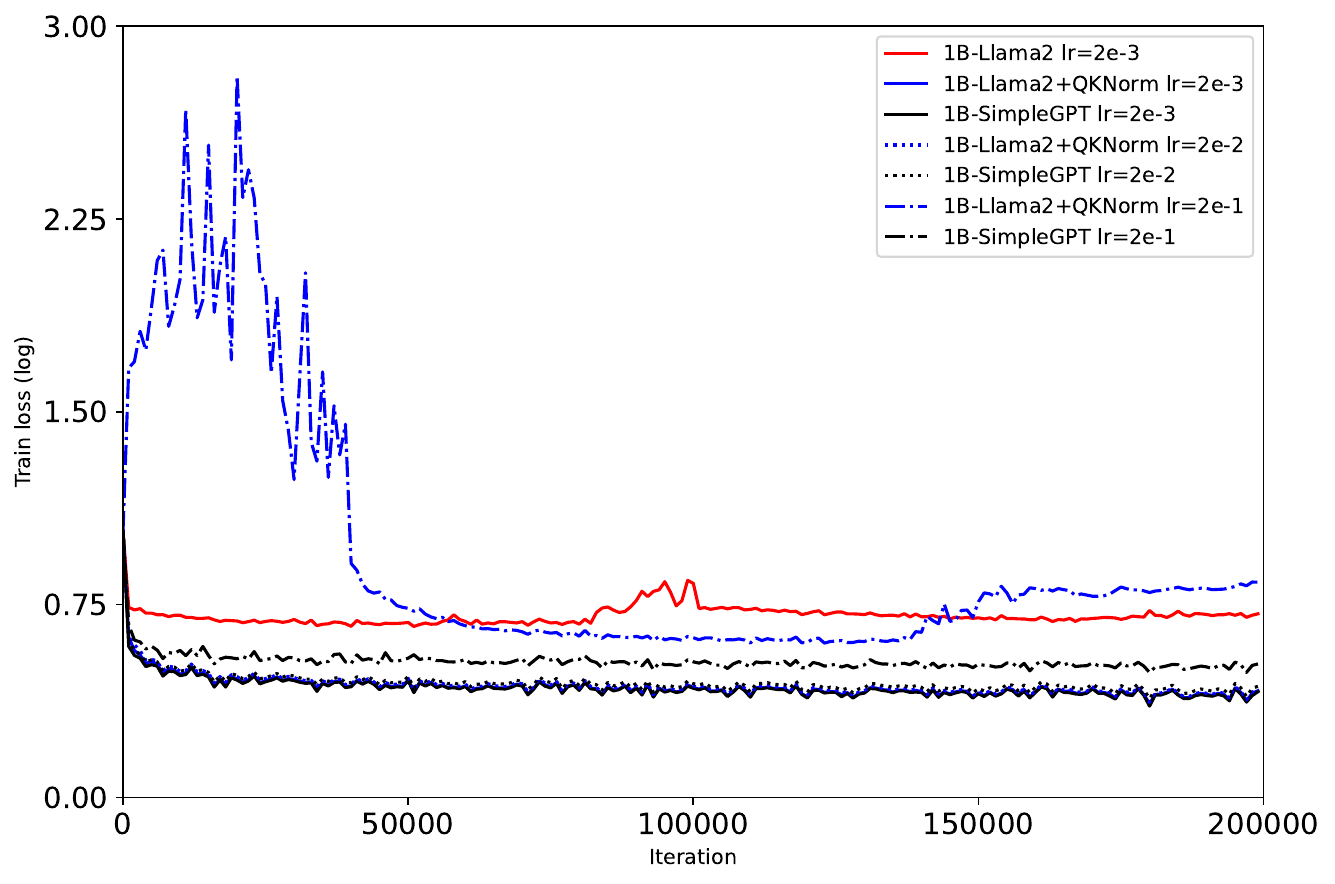}
    \caption{The largest admissible learning rate for Llama2-B, Llama2-1B with QKNorm, and SimpleGPT-1B.}
    \vspace{-6pt}
    \label{fig:simplegpt_max_learning_rate}
\end{figure}

\subsection{SimpleGPT 1B based on Llama2}
In this subsection, we evaluate SimpleGPT 1B and compare it against the standard  Llama2 1B as well as  Llama2 1B with QKNorm. We train the model for 200K steps (following~\cite{adam_mini_zhang2024adam}), with a global batch size of 256 and a sequence length of 512, resulting in approximately 26B training tokens.
We train all models on the C4 dataset following the same training recipe as their corresponding baselines.
The results are presented in \autoref{fig:simplegpt_1b}. The loss curve is smoothed by 80\% in Tensorboard. For all experiments, we report the training loss of the last step.

In \autoref{fig:simplegpt_1b}, SimpleGPT 1B achieves notable improvement over Llama2 1B with QKNorm.
Specifically, the training loss is reduced from 2.478 to 2.446, corresponding to an absolute improvement of 0.032.
Hence, SimpleGPT provides measurable gains, even at the small 1B scale.

\begin{figure*}[t]
    \centering
    \begin{subfigure}{0.33\textwidth}
        \centering
        \includegraphics[width=\linewidth]{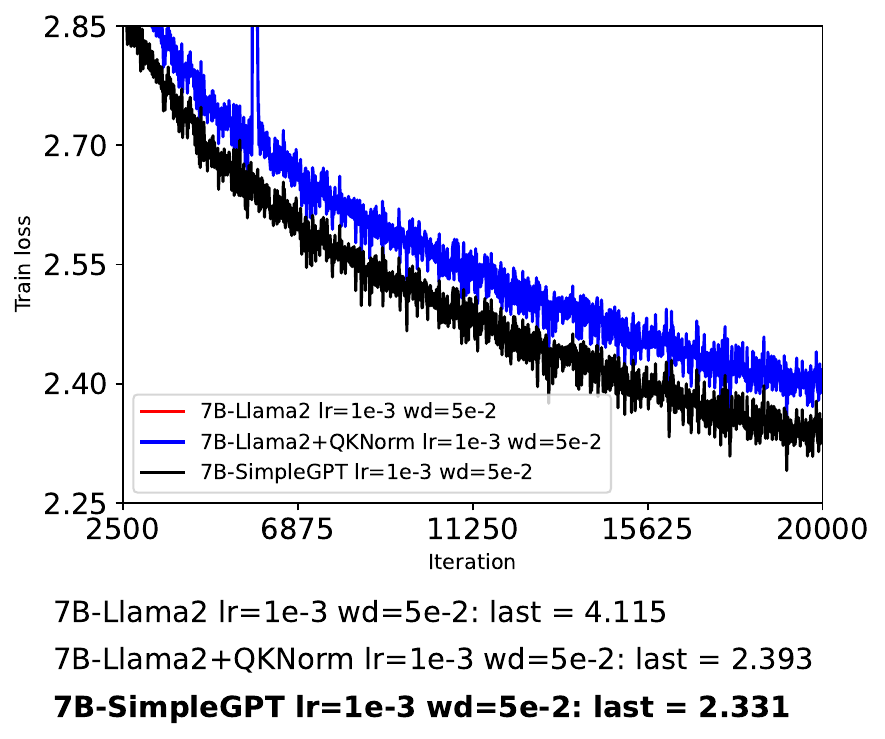}
        \caption{SimpleGPT 7B with 20K steps.}
        \label{fig:simplegpt_7b_20k}
    \end{subfigure}
    \begin{subfigure}{0.33\textwidth}
        \centering
        \includegraphics[width=\linewidth]{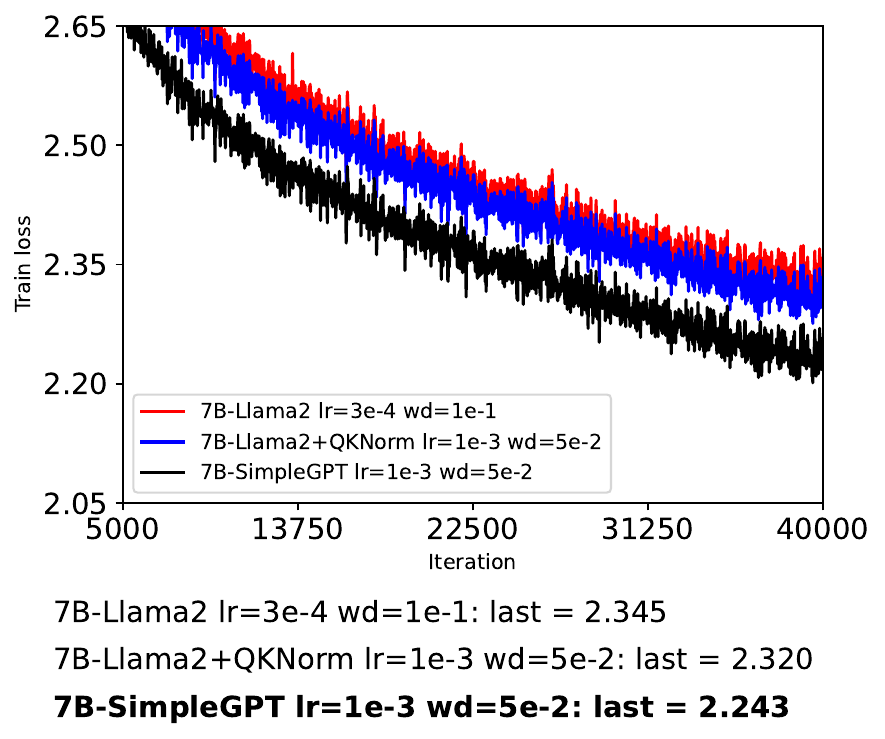}
        \caption{SimpleGPT 7B with 40K steps.}
        \label{fig:simplegpt_7b_40k}
    \end{subfigure}
    \begin{subfigure}{0.33\textwidth}
        \centering
        \includegraphics[width=\linewidth]{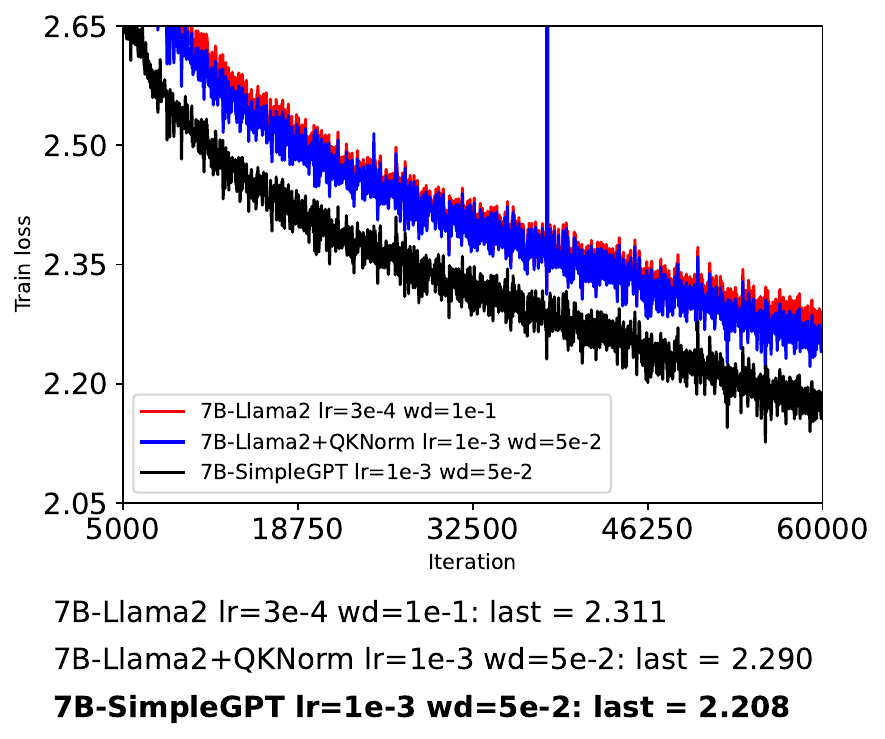}
        \caption{SimpleGPT 7B with 60K steps.}
        \label{fig:simplegpt_7b_60k}
    \end{subfigure}
    \caption{
        The training loss curves of Llama2 7B, Llama2 7B with QKNorm and SimpleGPT 7B under 20K, 40K and 60K training steps.
    }
    \label{fig:simplegpt_7b_several_steps}
\end{figure*}

\begin{figure}[t]
    \centering
    \includegraphics[width=0.95\linewidth]{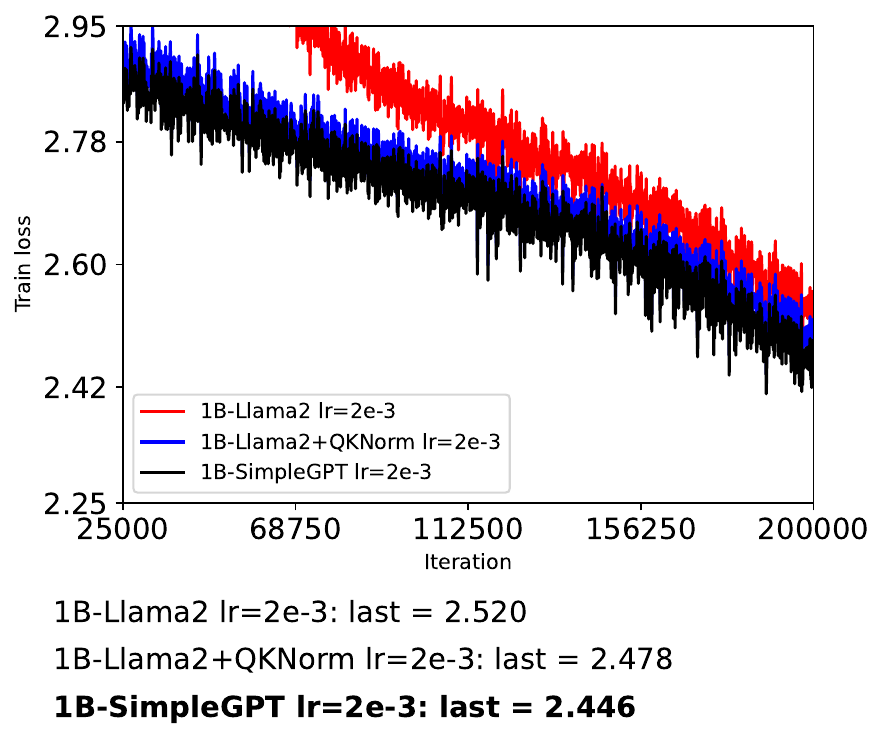}
    \caption{The training loss curves of Llama2 1B, Llama2 1B with QKNorm and SimpleGPT 1B under 200K training steps.}
    \vspace{-6pt}
    \label{fig:simplegpt_1b}
\end{figure}

\subsection{SimpleGPT 7B based on Llama2}
We compare SimpleGPT 7B against the standard Llama2 7B and Llama2 7B with QKNorm in \autoref{fig:simplegpt_7b_several_steps}. We train the models  for 20K, 40K, and 60K steps, corresponding to approximately 8B, 16B, and 24B tokens, respectively. All models are trained on the C4 dataset following the same training recipe as their corresponding baselines. SimpleGPT 7B uses a 0.001 learning rate, which is $3\times$ larger than that used in  Llama2~\cite{llama2_touvron2023llama} 7B model.

\begin{figure*}[t!]
    \centering
    \begin{subfigure}{0.33\textwidth}
        \centering
        \includegraphics[width=\linewidth]{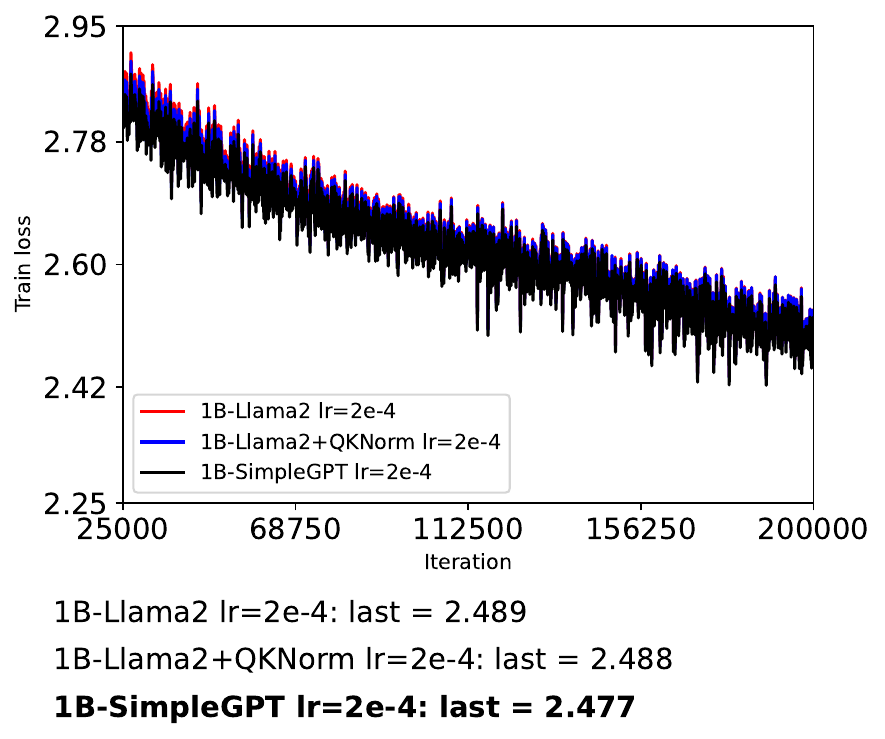}
        \caption{SimpleGPT 1B with lr=2e-4.}
        \label{fig:simplegpt_1b_4}
    \end{subfigure}
    \begin{subfigure}{0.33\textwidth}
        \centering
        \includegraphics[width=\linewidth]{Figures/simplegpt/big_model/1B/Llama2_1B_lr2e_3_200K_smooth.pdf}
        \caption{SimpleGPT 1B with lr=2e-3.}
        \label{fig:simplegpt_1b_3}
    \end{subfigure}
    \begin{subfigure}{0.33\textwidth}
        \centering
        \includegraphics[width=\linewidth]{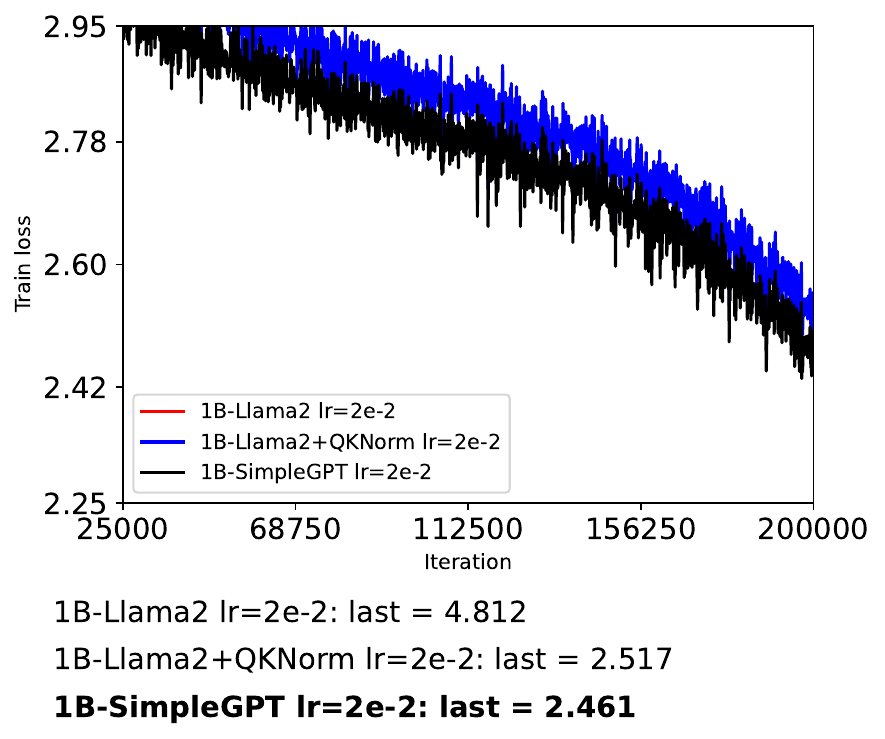}
        \caption{SimpleGPT 1B with lr=2e-2.}
        \label{fig:simplegpt_1b_2}
    \end{subfigure}
    \caption{
        Overall comparison across  Llama2 1B, Llama2 1B with QKNorm and SimpleGPT 1B under three different learning rates. Adam-mini uses a $2\times 10^{-4}$ learning rate. In SimpleGPT, we enable a $10\times$ learning rate and obtain better performance.
    }
    \label{fig:six_subfigures}
\end{figure*}

We make the following observations.
First, the performance gain of SimpleGPT over Llama2+QKNorm is consistently significant throughout training: the improvement reaches 0.062 at 20K steps, increases to 0.077 at 40K steps, and remains at a comparable level (0.082) at 60K steps.
Second, SimpleGPT maintains more stable training dynamics compared to Llama2 with QKNorm.
Third, as training progresses and more tokens are observed, the relative improvement does not diminish, indicating that the advantage of SimpleGPT is stable rather than a transient early-training effect.
Finally, we observe a clear scaling trend with respect to model size.
While the 1B model trained on 26B tokens achieves a modest improvement of approximately 0.03, the 7B model trained on 24B tokens exhibits a substantially larger gain of 0.08.

\begin{figure}[t]
    \centering
    \vspace{-4pt}
    \includegraphics[width=0.95\linewidth]{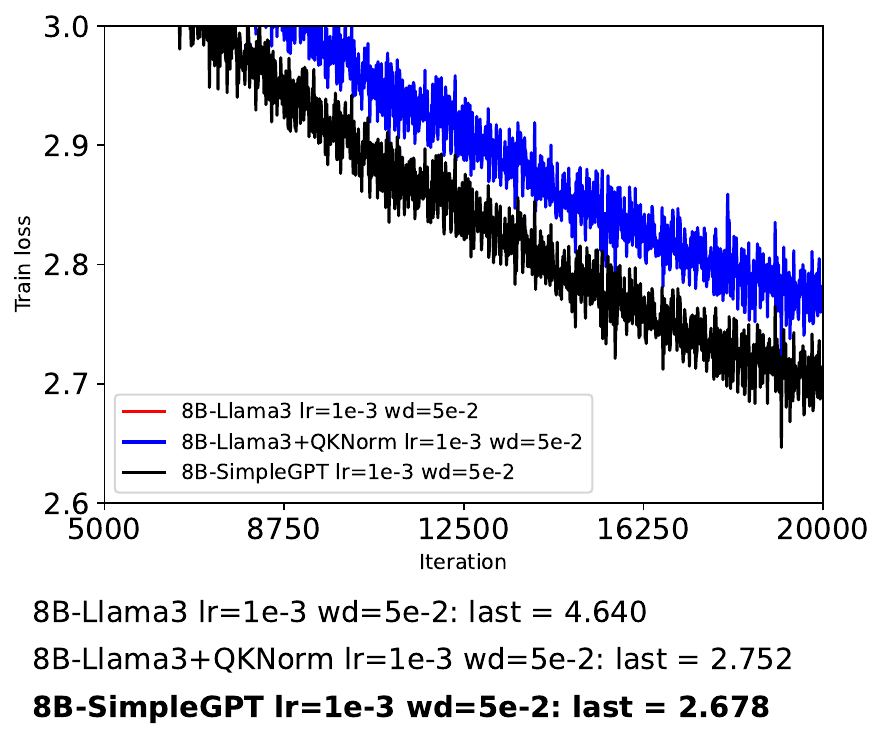}
    \caption{The training loss curves of Llama3 8B, Llama3 8B with QKNorm and SimpleGPT 8B.}
    \vspace{-6pt}
    \label{fig:simplegpt_8b}
\end{figure}

\begin{figure}[t]
    \centering
    \vspace{-4pt}
    \includegraphics[width=0.98\linewidth]{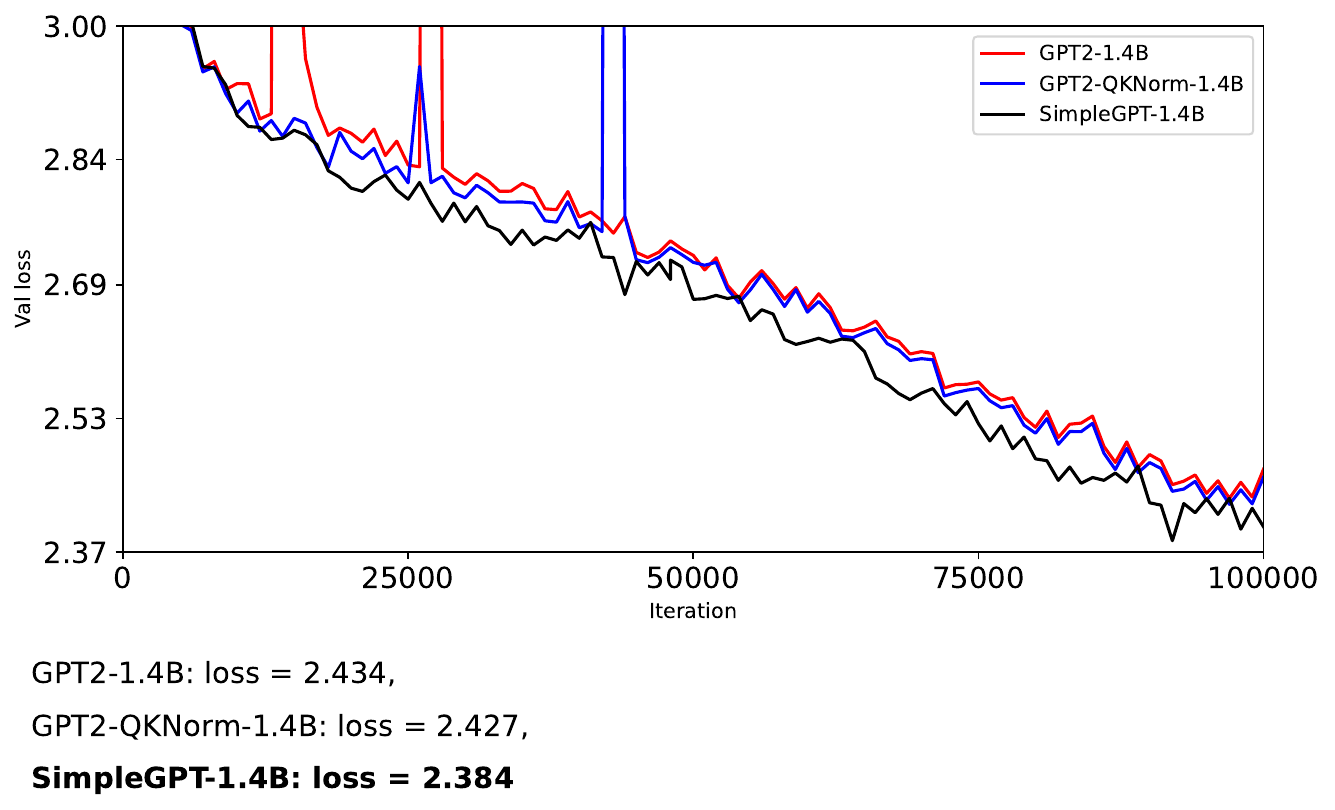}
    \caption{The validation loss curves of GPT2 1.4B, GPT2 1.4B with QKNorm and SimpleGPT 1.4B under 100K training steps.}
    \vspace{-6pt}
    \label{fig:simplegpt_1.4b}
\end{figure}

\subsection{SimpleGPT 8B  based on Llama3}
At the 8B scale, our experiments are based on the Llama3 8B architecture.
We train both SimpleGPT 8B and Llama3 8B on the C4 dataset with a global batch size of 192 and a sequence length of 2048.
We conduct training for 20K steps, corresponding to approximately 8B training tokens. We do not train for more steps due to compute constraints. SimpleGPT 8B employs a $3\times$ larger learning rate than Llama3 8B, and as shown in \autoref{fig:simplegpt_8b}, achieves a substantially lower training loss. Moreover, the magnitude of the performance gain is consistent with that observed for the 7B model, suggesting that our method exhibits favorable scaling behavior with increasing model size.

\subsection{SimpleGPT 1.4B  based on nanoGPT}
Finally, we evaluate SimpleGPT 1.4B on the nanoGPT code base.
All models are trained for 100K steps, corresponding to approximately 50B tokens. SimpleGPT 1.4B is trained using a learning rate that is $3\times$ larger than the baseline.

We report validation losses in \autoref{fig:simplegpt_1.4b}.
Note that, since validation loss is recorded once every 1{,}000 steps, the curves in \autoref{fig:simplegpt_1.4b} appear different compared to earlier figures.

We observe that GPT-2 with QKNorm achieves nearly identical performance to the original GPT-2, indicating that QKNorm alone provides limited benefits in this setting.
Consistent with the results on LLaMA2 1B, SimpleGPT 1.4B based on nanoGPT yields an improvement of approximately 0.043.
These findings suggest that the gains introduced by SimpleNorm are stable across architectures.

\subsection{Ablation Study}
\emph{Different learning rates.} 
As shown in \autoref{fig:six_subfigures}, we conduct experiments on a 1B model using three different learning rates.
Under the learning rate $2\times10^{-4}$, LLaMA2 1B with QKNorm only slightly outperforms the original LLaMA2 1B.
When the learning rate is increased to $2\times10^{-3}$, the improvement from QKNorm becomes more pronounced, which we attribute to the smoother optimization landscape induced by QKNorm in comparison to PreNorm.
Importantly, across all learning rates, SimpleGPT achieves consistent improvement over LLaMA2 1B with QKNorm.
For a fair comparison, reported results are obtained under the best-performing configuration of LLaMA2 1B with QKNorm.

\subsection{Discussion about training time}
We compare the training speed of our SimpleGPT 8B model with that of Llama3 8B with QKNorm. On average, the Llama3 8B model requires 1553 ms per training step while our SimpleGPT 8B model takes 1603 ms per step. This corresponds to a reasonable slowdown of around 3\%, which can likely be further reduced by more clever kernel design or swapping the normalization operator to a more fusion-friendly point-wise functions like \(\mathrm{Derf}\)~\citep{stronger_norm_chen2025stronger}.
\section{Conclusion}
\label{sec:conclusion}
In this work, we revisit Transformer optimization from a second-order perspective and establish a direct connection between architectural design, the Hessian matrix, and optimization stability. By introducing SimpleNorm and analyzing its induced Hessian structure, we show that reducing the Hessian norm of the activation with respect to the loss enables substantially larger admissible learning rates. The resulting model, \textbf{SimpleGPT}, reliably supports learning rates up to 3$\times$-10$\times$ larger than strong baselines while maintaining stable optimization. Across extensive experiments on \texttt{nanoGPT}-, \texttt{Llama2} and \texttt{Llama3}-style models, spanning parameter scales from 1B to 8B, our method consistently achieves substantially stronger performance than GPT with QKNorm. Importantly, these gains are obtained with minimal additional computational overhead.

\newpage
\section*{Impact Statement}
This paper presents work whose goal is to advance the field of Machine
Learning. There are many potential societal consequences of our work, none
which we feel must be specifically highlighted here.


\bibliography{ref}
\bibliographystyle{icml2026}

\newpage
\appendix
\onecolumn

The derivations of the equations appearing in both the main text and the appendix are available in the following material~\cite{matrix_computation_golub2013matrix, matrix_analysis_horn2012matrix, matrix_cookbook_petersen2008matrix, matrix_diff_magnus2019matrix, nesterov1983method, nesterov1998introductory, nestorov_nesterov2013introductory, boyd_boyd2004convex, qi_taming_transformer_qitaming, qi2023understanding, dnt_qi2025dnt}.

\section{Derivatives of  $\nabla_{\vx}\ell$ and $\nabla_{\vx}^{2}\ell$ for SimpleNorm}
\label{appendix:derivation_of_x_simplenorm}

We consider the mapping
\[
\vy
=
\vgamma \odot \sqrt{d}\,
\frac{\mW\vx}{\|\mW\vx\|_2},
\]
where $\vgamma \in\mathbb{R}^{d}$, and $\odot$ denotes elementwise multiplication and $\ell=\ell(\vy)$ is a scalar loss.

As before, let us define some intermediate variables,
\begin{equation*}
\vz=\mW\vx,
\qquad
s=\|\vz\|_2,
\qquad
\vu=\frac{\vz}{s},
\qquad
\mP=\mI-\vu\vu^{\top}, 
\qquad \mD=\operatorname{Diag}(\vgamma),
\end{equation*}
\noindent where $\mP, \mD$ are symmetric.

Let us define the normalized and scaled output as
\begin{equation*}
\vy=\sqrt{d}\,\mD\,\vu,
\end{equation*}
and let
\begin{equation*}
\vg_{\vy}:=\nabla_{\vy}\ell,
\qquad
\mH_{\vy\vy}:=\nabla^2_{\vy\vy}\ell.
\end{equation*}

Let us define the Jacobian $\mA$ of the normalization map $\vu$ with respect to $\vz$ as
\begin{equation*}
\mA:=\frac{\partial \vu}{\partial \vz}
=
\frac{1}{s}\,\mP,
\end{equation*}
and consequently
\begin{equation*}
\frac{\partial \vy}{\partial \vz}
=
\sqrt{d}\,\mD\,\mA
=
\frac{\sqrt{d}}{s}\,\mD\,\mP.
\end{equation*}
We will repeatedly use the chain rule through the path
\[
\mW \to \vz \to \vu \to \vy \to \ell,
\qquad\text{and similarly}\qquad
\vgamma \to \mD \to \vy \to \ell .
\]

\paragraph{(1) First-order derivative.}
Since $\vz=\mW\vx$, we have
\begin{equation*}
d\vz=\mW\,d\vx.
\end{equation*}
From $\vu=\vz/s$ and $\mA=\partial \vu/\partial \vz$, it follows that
\begin{equation*}
d\vu=\mA\,d\vz=\frac{1}{s}\,\mP\,d\vz.
\end{equation*}
Therefore,
\begin{equation*}
d\vy
=
\sqrt{d}\,\mD\,d\vu
=
\sqrt{d}\,\mD\,\mA\,\mW\,d\vx,
\end{equation*}
so the Jacobian is
\begin{equation*}
\mJ_{\vx}^{\vy}:=\frac{\partial \vy}{\partial \vx}
=
\sqrt{d}\,\mD\,\mA\,\mW
=
\frac{\sqrt{d}}{s}\,\mD\,\mP\,\mW.
\end{equation*}

Applying the chain rule gives
\begin{equation*}
\nabla_{\vx}\ell
=
{\mJ_{\vx}^{\vy}}^{\top}\vg_{\vy}
=
\sqrt{d}\,\mW^{\top}\mA^{\top}\mD^{\top}\vg_{\vy}.
\end{equation*}
Using $\mA^{\top}=\mA$ and $\mD^{\top}=\mD$, we obtain
\begin{equation}
\nabla_{\vx}\ell
=
\sqrt{d}\,\mW^{\top}\mA\,\mD\,\vg_{\vy}
=
\frac{\sqrt{d}}{s}\,\mW^{\top}\mP\,\mD\,\vg_{\vy}.
\end{equation}

\paragraph{(2) Second-order derivative.}

Differentiating $\nabla_{\vx}\ell=\sqrt{d}\,\mW^{\top}\mA\,\mD\,\vg_{\vy}$ yields
\begin{equation*}
d(\nabla_{\vx}\ell)
=
\sqrt{d}\,\mW^{\top}
\Bigl(\mA\,\mD\,d\vg_{\vy} + d\mA\,\mD\,\vg_{\vy}
\Bigr).
\end{equation*}
This induces the standard decomposition
\begin{equation*}
\nabla_{\vx}^2\ell
=
{\mJ_{\vx}^{\vy}}^{\top}\mH_{\vy\vy}\mJ_{\vx}^{\vy}
+
\mC,
\end{equation*}
where the two terms are computed below.

(A) Linear (Gauss--Newton) term.

Since $d\vg_{\vy}=\mH_{\vy\vy}\,d\vy$, and $d\vy=\mJ_{\vx}^{\vy}\,d\vx$, we have
\begin{equation*}
d\vg_{\vy}
=
\mH_{\vy\vy}\,\mJ_{\vx}^{\vy}\,d\vx.
\end{equation*}
Substituting into $\sqrt{d}\,\mW^{\top}\mA\,\mD\,d\vg_{\vy}$ gives
\begin{equation*}
\sqrt{d}\,\mW^{\top}\mA\,\mD\,d\vg_{\vy}
=
\sqrt{d}\,\mW^{\top}\mA\,\mD\,\mH_{\vy\vy}\,\mJ_{\vx}^{\vy}\,d\vx
=
{\mJ_{\vx}^{\vy}}^{\top}\mH_{\vy\vy}\mJ_{\vx}^{\vy}\,d\vx.
\end{equation*}
Using $\mA=\mP/s$ and $\mJ_{\vx}^{\vy}=(\sqrt{d}/s)\mD\mP\mW$, we obtain the explicit form
\begin{equation*}
{\mJ_{\vx}^{\vy}}^{\top}\mH_{\vy\vy}\mJ_{\vx}^{\vy}
=
\frac{d}{s^2}\,
\mW^{\top}\mP\,\mD\,\mH_{\vy\vy}\,\mD\,\mP\,\mW.
\end{equation*}

(B) Curvature (normalization) term.

Recall $\mA=\frac{1}{s}\mP$.
Using $d\vu=\mA\,d\vz$ and $ds=d\|\vz\|_2=\vu^{\top}d\vz$, one obtains
\begin{equation*}
d\mA
=
-\frac{1}{s^{2}}
\Bigl(
(\vu^{\top}d\vz)\mP
+\mP\,d\vz\,\vu^{\top}
+\vu\,d\vz^{\top}\mP
\Bigr).
\end{equation*}
Right-multiplying by $\mD\,\vg_{\vy}$ gives
\begin{equation*}
d\mA\,\mD\,\vg_{\vy}
=
-\frac{1}{s^{2}}
\Bigl(
(\vu^{\top}d\vz)\mP\,\mD\,\vg_{\vy}
+\mP\,d\vz\,\vu^{\top}\mD\,\vg_{\vy}
+\vu\,d\vz^{\top}\mP\,\mD\,\vg_{\vy}
\Bigr).
\end{equation*}
Equivalently, pulling the scalar contractions to the left yields the linear map in $d\vz$:
\begin{equation*}
d\mA\,\mD\,\vg_{\vy}
=
-\frac{1}{s^{2}}
\Bigl(
\mP \mD\vg_{\vy} \vu^{\top} 
+ \vu^{\top}\mD\vg_{\vy}\mP 
+ \vu \vg_{\vy}^{\top}\mD\mP
\Bigr)\,d\vz.
\end{equation*}
Substituting $d\vz=\mW\,d\vx$ and left-multiplying by $\sqrt{d}\,\mW^{\top}$ gives the curvature term
\begin{equation*}
\mC
=
-\frac{\sqrt{d}}{s^{2}}\,
\mW^{\top}
\Bigl(
\mP \mD\vg_{\vy} \vu^{\top} 
+ \vu^{\top}\mD\vg_{\vy}\mP 
+ \vu \vg_{\vy}^{\top}\mD\mP
\Bigr)\mW .
\end{equation*}
(If desired, define $\vg_1:=\mD\vg_{\vy}$ to simplify the expression, but we keep the figure's symbols explicit.)

Combining the linear and curvature terms, the Hessian with respect to $\vx$ is
\begin{equation}
\mH_{\vx\vx} = 
\nabla_{\vx}^2 \ell
=
\frac{d}{s^2}\,
\mW^{\top}\mP\,\mD\,\mH_{\vy\vy}\,\mD\,\mP\,\mW
\;-\;
\frac{\sqrt{d}}{s^{2}}\,
\mW^{\top}
\Bigl(
\mP \mD\vg_{\vy} \vu^{\top} 
+ \vu^{\top}\mD\vg_{\vy}\mP 
+ \vu \vg_{\vy}^{\top}\mD\mP
\Bigr)\mW, 
\end{equation}
where $
\vz=\mW\vx,
s=\|\vz\|_2,
\vu=\frac{\vz}{s},
\mP=\mI-\vu\vu^{\top}, \mD=\operatorname{Diag}(\vgamma).$

\ 

\

\ 

\section{Derivatives of $\nabla_{\mW}\ell$ and $\nabla_{\operatorname{vec}(\mW)}^{2}\ell$ for $\vy = \mW \vx$}
\label{appendix:derivation_of_W_linear}
We consider the linear mapping
\[
\vy = \mW\vx,
\]
where \(\mW\in\mathbb{R}^{d\times m}\), \(\vx\in\mathbb{R}^{m}\), and \(\vy\in\mathbb{R}^{d}\).
Let \(\ell=\ell(\vy)\) be a scalar-valued loss function.

We define the first- and second-order derivatives of the loss $l$
with respect to \(\vy\) as
\[
\vg_{\vy} \;:=\; \nabla_{\vy}\ell(\vy)\in\mathbb{R}^{d},
\qquad
\mH_{\vy\vy} \;:=\; \nabla_{\vy}^{2}\ell(\vy)\in\mathbb{R}^{d\times d}.
\]

\paragraph{(1) First-order derivative with respect to \(\mW\).}
Since \(\vy\) depends linearly on \(\mW\), a first-order variation satisfies
\[
d\vy = d\mW\,\vx.
\]
Applying the chain rule,
\[
d\ell = \vg_{\vy}^{\top} d\vy
       = \vg_{\vy}^{\top}(d\mW\,\vx)
       = \langle \vg_{\vy}\vx^{\top},\, d\mW\rangle_F .
\]
Matching coefficients under the Frobenius inner product yields the gradient
\begin{equation}
    \nabla_{\mW}\ell
    \;=\;
    \vg_{\vy}\,\vx^{\top}.
\end{equation}

\paragraph{(2) Second-order derivative with respect to \(\mW\).}
To express second-order derivatives, it is convenient to vectorize \(\mW\).
Using the standard identity for matrix--vector products,
\[
\vy
=
(\vx^{\top}\otimes \mI_d)\,\operatorname{vec}(\mW),
\]
which makes the dependence of \(\vy\) on \(\operatorname{vec}(\mW)\) explicit.

\paragraph{Jacobian with respect to \(\operatorname{vec}(\mW)\).}
From the above expression, the Jacobian of \(\vy\) with respect to
\(\operatorname{vec}(\mW)\) is
\[
\mJ_{\operatorname{vec}(\mW)}^{\vy}
\;\triangleq\;
\frac{\partial \vy}{\partial \operatorname{vec}(\mW)}
=
\vx^{\top}\otimes \mI_d
\;\in\;\mathbb{R}^{d\times (md)}.
\]

\paragraph{Hessian with respect to \(\operatorname{vec}(\mW)\).}
Because the mapping \(\mW\mapsto \vy\) is linear, it contributes no second-order term.
Hence, by the second-order chain rule, the Hessian of the loss with respect to
\(\operatorname{vec}(\mW)\) is
\[
\nabla_{\operatorname{vec}(\mW)}^{2}\ell
=
{\mJ_{\operatorname{vec}(\mW)}^{\vy}}^{\top}\,
\mH_{\vy\vy}\,
\mJ_{\operatorname{vec}(\mW)}^{\vy}.
\]
Substituting the explicit Jacobian gives us
\[
\nabla_{\operatorname{vec}(\mW)}^{2}\ell
=
(\vx\otimes \mI_d)\,\mH_{\vy\vy}\,(\vx^{\top}\otimes \mI_d).
\]

\paragraph{Kronecker-product simplification.}
Using standard identities of the Kronecker product, the Hessian simplifies to
\begin{equation}
    \nabla_{\operatorname{vec}(\mW)}^{2}\ell
    =
    (\vx\vx^{\top})\otimes \mH_{\vy\vy}.
\end{equation}

This result shows that the Hessian with respect to the weight matrix factorizes
into a Kronecker product of an input-dependent term \(\vx\vx^{\top}\) and an
output-space curvature term \(\mH_{\vy\vy}\).
Equivalently, when viewed as a block matrix with \(m\times m\) blocks of size
\(d\times d\), the \((i,j)\)-th block is
\[
\big[\nabla_{\operatorname{vec}(\mW)}^{2}\ell\big]_{(i,j)\text{ block}}
=
x_i x_j\,\mH_{\vy\vy},
\]
where \(x_i\) denotes the \(i\)-th entry of \(\vx\).

\ 

\ 

\

\section{Derivatives of  $\nabla_{\vgamma}\ell$, $\nabla_{\vgamma}^{2}\ell$, $\nabla_{\mW}\ell$ and $\nabla_{\operatorname{vec}(\mW)}^{2}\ell$ where $\vy = 
\vgamma \odot \sqrt{d}\, \frac{\mW\vx}{\|\mW\vx\|_2}$.}
\label{appendix:derivation_of_W_simplenorm}

We consider the mapping
\[
\vy = 
\vgamma \odot \sqrt{d}\, \frac{\mW\vx}{\|\mW\vx\|_2},
\]
where $\odot$ denotes elementwise multiplication and $\ell=\ell(\vy)$ is a scalar loss.
Define the intermediate variables
\[
\vz=\mW\vx,\qquad
s=\|\vz\|_2,\qquad
\vu=\frac{\vz}{s},\qquad
\mP=\mI-\vu\vu^{\top},\qquad
\mD=\operatorname{Diag}(\vgamma).
\]
We have
\[
\vy=\sqrt{d}\,\mD\vu,\qquad
\ell=\ell(\vy),\qquad
\vg_{\vy}:=\nabla_{\vy}\ell,\qquad
\mH_{\vy\vy}:=\nabla^2_{\vy\vy}\ell.
\]

Remind two useful Jacobian matrices,

\[
\frac{\partial \vu}{\partial \vz}=\mA=\frac{1}{s}\mP,
\qquad
\frac{\partial \vy}{\partial \vz}
=\sqrt{d}\,\mD\mA
=\frac{\sqrt{d}}{s}\,\mD\mP.
\]

We will repeatedly use the chain rule through the path
$\mW \to \vz \to \vu \to \vy \to \ell$,
and similarly
$\vgamma \to \mD \to \vy \to \ell$.

\subsection{Part I: derivatives w.r.t.\ $\vgamma$}

\paragraph{(1) First-order derivative.}

Since $\vy_i=\sqrt{d}\,\gamma_i u_i$, the Jacobian of $\vy$ w.r.t.\ $\vgamma$ is diagonal:
\[
\frac{\partial \vy}{\partial \vgamma}
=\sqrt{d}\,\operatorname{Diag}(\vu).
\]
Applying the chain rule
$\vg_{\vgamma}= \big(\frac{\partial \vy}{\partial \vgamma}\big)^{\!\top}\vg_{\vy}$,
we obtain
\begin{equation}
   \vg_{\vgamma}
=\nabla_{\vgamma}\ell
=\sqrt{d}\,\operatorname{Diag}(\vu)\,\vg_{\vy}
=\sqrt{d}\,(\vu\odot \vg_{\vy}). 
\end{equation}

\paragraph{(2) Second-order derivative.}

Because $\vy$ is linear in $\vgamma$,
$\frac{\partial^2 \vy}{\partial \vgamma^2}=0$,
hence the Hessian comes purely from $\mH_{\vy\vy}$:
\begin{equation}
  \mH_{\vgamma\vgamma}
=\nabla^2_{\vgamma\vgamma}\ell
=\Big(\frac{\partial \vy}{\partial \vgamma}\Big)^{\!\top}
\mH_{\vy\vy}
\Big(\frac{\partial \vy}{\partial \vgamma}\Big)
=
d\ \operatorname{Diag}(\vu)\,\mH_{\vy\vy}\,\operatorname{Diag}(\vu).  
\end{equation}

\subsection{Part II: derivatives w.r.t.\ $\mW$}
\paragraph{(1) First-order derivative.}

Step 1 ($\vy\to\vz$): gradient w.r.t.\ $\vz$

By the chain rule,
\[
\vg_{\vz}
=\Big(\frac{\partial \vy}{\partial \vz}\Big)^{\!\top}\vg_{\vy}
=\Big(\frac{\sqrt{d}}{s}\mD\mP\Big)^{\!\top}\vg_{\vy}.
\]
Using $\mP^{\top}=\mP$ and $\mD^{\top}=\mD$, this simplifies to
\[
\vg_{\vz}
=\frac{\sqrt{d}}{s}\,\mP\,\mD\,\vg_{\vy}.
\]

Step 2 ($\vz=\mW\vx\to\mW$): gradient w.r.t.\ $\mW$

We write the differential $d\vz=d\mW\,\vx$, so
\[
d\ell
=\vg_{\vz}^{\top}d\vz
=\vg_{\vz}^{\top}(d\mW\,\vx)
=\langle \vg_{\vz}\vx^{\top},\, d\mW\rangle_F.
\]
Matching coefficients under the Frobenius inner product yields
\begin{equation}
    \vg_{\mW}
=\nabla_{\mW}\ell
=\vg_{\vz}\vx^{\top}
=\frac{\sqrt{d}}{s}\,(\mP\mD\vg_{\vy})\,\vx^{\top}.
\end{equation}

If one needs the vectorized mapping,
\[
\operatorname{vec}(\vz)
=\operatorname{vec}(\mW\vx)
=(\vx^{\top}\otimes \mI)\operatorname{vec}(\mW).
\]

\paragraph{(2) Second-order derivative.}

We first derive $\mH_{\vz\vz}:=\nabla^2_{\vz\vz}\ell$,
then lift it to $\operatorname{vec}(\mW)$.

Step 1: Hessian w.r.t.\ $\vz$

The second-order chain rule gives
\[
\mH_{\vz\vz}
=
\Big(\frac{\partial \vy}{\partial \vz}\Big)^{\!\top}
\mH_{\vy\vy}
\Big(\frac{\partial \vy}{\partial \vz}\Big)
+
\sum_{k=1}^{d}(\vg_{\vy})_k\,\nabla^2_{\vz\vz} y_k.
\]

The first term is the Gauss--Newton part:
\[
\Big(\frac{\partial \vy}{\partial \vz}\Big)^{\!\top}
\mH_{\vy\vy}
\Big(\frac{\partial \vy}{\partial \vz}\Big)
=
\frac{d}{s^2}\,
\mP\,\mD\,\mH_{\vy\vy}\,\mD\,\mP.
\]

For the second term, we use the bilinear form of the normalization Hessian:
\[
\nabla^2 \vu(\vz)[\va,\vb]
=
-\frac{1}{s^2}
\Big(
(\vu^{\top}\va)\,\mP\vb
+(\vu^{\top}\vb)\,\mP\va
+(\va^{\top}\mP\vb)\,\vu
\Big).
\]

Since $\vy=\sqrt{d}\mD\vu$,
\[
\sum_{k=1}^{d}(\vg_{\vy})_k\,\nabla^2 y_k[\va,\vb]
=
\vg_{\vy}^{\top}\nabla^2\vy[\va,\vb]
=
\sqrt{d}\,(\mD\vg_{\vy})^{\top}\nabla^2\vu[\va,\vb].
\]

Substituting gives
\[
\sum_{k=1}^{d}(\vg_{\vy})_k\,\nabla^2 y_k[\va,\vb]
=
-\frac{\sqrt{d}}{s^2}
\Big(
(\vu^{\top}\va)(\mD\vg_{\vy})^{\top}\mP\vb
+(\vu^{\top}\vb)(\mD\vg_{\vy})^{\top}\mP\va
+(\vu^{\top}\mD\vg_{\vy})\,\va^{\top}\mP\vb
\Big).
\]

Equivalently, the associated matrix form is
\[
\sum_{k=1}^{d}(\vg_{\vy})_k\,\nabla^2_{\vz\vz}y_k
=
-\frac{\sqrt{d}}{s^2}
\Big(
\mP \mD\vg_{\vy} \vu^{\top} 
+ \vu^{\top}\mD\vg_{\vy}\mP 
+ \vu \vg_{\vy}^{\top}\mD\mP
\Big).
\]

Combining both terms,
\[
\mH_{\vz\vz}
=
\frac{d}{s^2}\,
\mP\,\mD\,\mH_{\vy\vy}\,\mD\,\mP
-
\frac{\sqrt{d}}{s^2}
\Big(
\mP \mD\vg_{\vy} \vu^{\top} 
+ \vu^{\top}\mD\vg_{\vy}\mP 
+ \vu \vg_{\vy}^{\top}\mD\mP
\Big)
,
\qquad s=\|\mW\vx\|_2.
\]


Using $\operatorname{vec}(\vz)=(\vx^{\top}\otimes \mI)\operatorname{vec}(\mW)$,
\begin{equation}
\begin{aligned}
    \mH_{\operatorname{vec}(\mW)\operatorname{vec}(\mW)} &=(\vx\vx^{\top})\otimes \mH_{\vz\vz} \\
&= (\vx\vx^{\top})\otimes
\left[
\frac{d}{s^2}\,
\mP\,\mD\,\mH_{\vy\vy}\,\mD\,\mP
-
\frac{\sqrt{d}}{s^2}
\Big(
\mP \mD\vg_{\vy} \vu^{\top} 
+ \vu^{\top}\mD\vg_{\vy}\mP 
+ \vu \vg_{\vy}^{\top}\mD\mP
\Big)
\right]. 
\end{aligned}
\end{equation}

\ 

\ 

\ 
\newpage

\section{Proof of \autoref{theorem:theorem_1}}
\label{appendix:proof_theorem_1}
\begin{proof}
Recall SimpleNorm:
\[
\vy = \vgamma \odot \sqrt{d}\,\frac{\mW\vx}{\|\mW\vx\|_2},
\qquad
\vz=\mW\vx,
\qquad
s:=\|\vz\|_2,
\qquad
\vu:=\frac{\vz}{s},
\qquad
\mP:=\mI-\vu\vu^\top.
\]
Assume $\mD=\operatorname{Diag}(\vgamma)=\mI$. Let $\ell=\ell(\vy)$ and define
\[
\vg_{\vy}:=\nabla_{\vy}\ell(\vy)\in\mathbb{R}^d,
\qquad
\mH_{\vy\vy}:=\nabla^2_{\vy\vy}\ell(\vy)\in\mathbb{R}^{d\times d}.
\]

By the chain rule, we have the decomposition:
\begin{equation}
\mH_{\vx\vx}
=
\nabla^2_{\vx}\ell
=
\underbrace{{\mJ_{\vx}^{\vy}}^\top \mH_{\vy\vy}\,\mJ_{\vx}^{\vy}}_{\text{Gauss--Newton  term}}
\;+\;
\underbrace{\mC}_{\text{curvature term}}
\end{equation}
Here the Jacobian is
\[
\mJ_{\vx}^{\vy}=\frac{\sqrt d}{s}\,\mD\mP\mW = \frac{\sqrt d}{s}\,\mP\mW,
\]
and therefore the Gauss--Newton term is
\begin{equation}
\mL
=
(\mJ_{\vx}^{\vy})^\top \mH_{\vy\vy}\mJ_{\vx}^{\vy}
=
\frac{d}{s^2}\,\mW^\top \mP\, \mH_{\vy\vy}\, \mP\,\mW.
\end{equation}
The curvature term (with the condition $\mD=\mI$) can be written as
\begin{equation}
\mC
=
-\frac{\sqrt d}{s^2}\,
\mW^\top
\Bigl(
(\mP\vg_{\vy})\vu^\top
+ (\vu^\top\vg_{\vy})\,\mP
+ \vu(\vg_{\vy}^\top \mP)
\Bigr)\mW,
\label{eq:C_exact}
\end{equation}
where $\vu^\top\vg_{\vy}$ is a scalar.

\paragraph{Bounding the Gauss--Newton term.}

Define
\[
\kappa :=
\frac{\sqrt d}{\|\mW\vx\|_2}\,\|\mW\|_2
\]

We show that $\kappa=\Theta(1)$ under high effective rank and typical input.
Let $\vx=\sqrt d\,\xi$ where $\xi$ is isotropic (e.g., uniform on the sphere or subgaussian). Then
\[
\mathbb{E}\|\mW\vx\|_2^2
=
d\,\mathbb{E}\,\xi^\top \mW^\top\mW\xi
=
\|\mW\|_F^2,
\]
and by standard concentration for quadratic forms,
\[
\|\mW\vx\|_2^2 \asymp \|\mW\|_F^2
\quad\Longrightarrow\quad
\|\mW\vx\|_2 \asymp \|\mW\|_F
\qquad\text{with high probability}.
\]
Thus
\[
\kappa
=
\frac{\sqrt d}{\|\mW\vx\|_2}\,\|\mW\|_2
\asymp
\sqrt d\,\frac{\|\mW\|_2}{\|\mW\|_F}
=
\sqrt{\frac{d}{r_{\mathrm{eff}}(\mW)}},
\qquad
r_{\mathrm{eff}}(\mW):=\frac{\|\mW\|_F^2}{\|\mW\|_2^2}.
\]
If $r_{\mathrm{eff}}(\mW)\asymp c d$, then $\kappa\asymp 1/\sqrt c = \Theta(1)$ with high probability.

Additionally, define
\[
\tau
:=
\frac{\big\|\widetilde{\mW}^\top \mP\,\mH_{\vy\vy}\,\mP\,\widetilde{\mW}\big\|_2}{\|\mH_{\vy\vy}\|_2}
\in[0,1],
\qquad
\widetilde{\mW}:=\frac{\mW}{\|\mW\|_2}.
\]
Under the theorem's ``non-pathological alignment'' assumption (i.e., the dominant spectral modes of $\mH_{\vy\vy}$ are not eliminated by projecting out $\mathrm{span}(\vu)$ and are represented in the range of $\mW$, which has high effective rank), we have $\tau=\Theta(1)$.

Now, it follows that
$$
\mL = \frac{d}{\|\mW\vx\|_2^2}\,\mW^\top \mP\,\mH_{\vy\vy}\,\mP\,\mW = \frac{\kappa^2}{\|\mW\|_2^2}\,\mW^\top \mP\,\mH_{\vy\vy}\,\mP\,\mW = \kappa^2 \,\widetilde{\mW}^\top \mP\,\mH_{\vy\vy}\,\mP\,\widetilde{\mW}
$$

and

\begin{equation}
    \|\mL\|_2 = \tau\,\kappa^2\,\|\mH_{\vy\vy}\|_2 \qquad \text{where } \tau = \Theta(1) \text{ and }\kappa = \Theta(1) \quad [\text{w.h.p}]
    \label{eq:L_exact_tau}
\end{equation}

\paragraph{Bounding the curvature term.}
From \autoref{eq:C_exact} and submultiplicativity,
\[
\|\mC\|_2
\le
\frac{\sqrt d}{s^2}\,\|\mW\|_2^2\,
\Big\|
(\mP\vg_{\vy})\vu^\top
+ (\vu^\top\vg_{\vy})\,\mP
+ \vu(\vg_{\vy}^\top \mP)
\Big\|_2.
\]
Now use $\|\vu\|_2=1$, $\|\mP\|_2=1$, and $\|\mP\vg_{\vy}\|_2\le \|\vg_{\vy}\|_2$:
\[
\|(\mP\vg_{\vy})\vu^\top\|_2=\|\mP\vg_{\vy}\|_2\|\vu\|_2\le \|\vg_{\vy}\|_2,
\]
\[
\|(\vu^\top\vg_{\vy})\,\mP\|_2 = |\vu^\top\vg_{\vy}|\,\|\mP\|_2 \le \|\vg_{\vy}\|_2,
\]
\[
\|\vu(\vg_{\vy}^\top \mP)\|_2 = \|\vu\|_2\,\|\mP\vg_{\vy}\|_2 \le \|\vg_{\vy}\|_2.
\]
By triangle inequality, the middle norm is at most $3\|\vg_{\vy}\|_2$, hence
\begin{equation}
\|\mC\|_2
\le
\frac{3\sqrt d}{\|\mW\vx\|_2^2}\,\|\mW\|_2^2\,\|\vg_{\vy}\|_2
=
\frac{3\kappa^2}{\sqrt d}\,\|\vg_{\vy}\|_2.
\label{eq:C_bound_final}    
\end{equation}

\paragraph{Dominance of $\mL$ over $\mC$.}
Combining \eqref{eq:L_exact_tau} and \eqref{eq:C_bound_final},
\[
\frac{\|\mC\|_2}{\|\mL\|_2}
\le
\frac{3}{\tau\sqrt d}\,\frac{\|\vg_{\vy}\|_2}{\|\mH_{\vy\vy}\|_2}.
\]
Assuming $\tau=\Theta(1)$ (non-pathological alignment) and $\|\vg_{\vy}\|_2/\|\mH_{\vy\vy}\|_2 = O(1)$ (bounded gradient-to-curvature ratio, as stated in the theorem assumptions), we obtain
\[
\frac{\|\mC\|_2}{\|\mL\|_2} = O(d^{-1/2}),
\]
so in high dimension $\|\mL\|_2 \gg \|\mC\|_2$ with high probability. Therefore the Gauss--Newton term dominates $\mH_{\vx\vx}$ in typical non-pathological regimes.

This completes the proof of \autoref{theorem:theorem_1}.
\end{proof}

\ 

\ 

\

\newpage

\section{Proof of \autoref{theorem:theorem_2}}
\label{appendix:proof_theorem_2}
\begin{proof}
Provided $\ell=\ell(\vy)$ is twice differentiable. Denote
\[
\vg_{\vy} := \nabla_{\vy}\ell,
\qquad
\mH_{\vy\vy} := \nabla^2_{\vy\vy}\ell,
\]
We have the two mappings:

\noindent {Linear:} \quad $\vy_1 = \mW_1 \vx$, \qquad
$\mH^{\mathrm{lin}}_{\vx\vx} = \mW_1^\top \mH_{\vy_1\vy_1}\mW_1$.

\noindent{SimpleNorm:} \quad
$\vy_2 = \mD \frac{\sqrt{d}\,\mW_2\vx}{\|\mW_2\vx\|_2}$,
where $\mD=\mathrm{Diag}(\vgamma)$ and define
\[
\vz=\mW_2\vx,\quad s=\|\vz\|_2,\quad \vu=\vz/s,\quad \mP=\mI-\vu\vu^\top.
\]
The Hessian w.r.t.\ $\vx$ admits the standard decomposition
\[
\mH^{\mathrm{sn}}_{\vx\vx} \;=\; \mL + \mC,
\qquad
\mL = {\mJ_{\vx}^{\vy_2}}^\top \mH_{\vy_2\vy_2} {\mJ_{\vx}^{\vy_2}},
\]
where $\mJ_{\vx}^{\vy_2}$ is the Jacobian of $\vy_2$ w.r.t.\ $\vx$, and $\mC$ is the
curvature term induced by the normalization.

Assuming the high-dimensional conditions stated in \autoref{theorem:theorem_1} hold
($\|\vx\|_2=\sqrt{d}$, $\mD=\mI$, $\|\mP\|_2 = 1$, $\mW_2$ has high effective rank, no pathological alignment, and $\|\vg_{\vy}\|_2/\|\mH_{\vy\vy}\|_2 = O(1)$), \autoref{theorem:theorem_1} shows that the Gauss-Newton term dominates the curvature term, i.e., \ $\|\mL\|_2 \gg \|\mC\|_2$. Therefore, we obtain the approximation $\|\mH^{\mathrm{sn}}_{\vx\vx}\|_2 \;\approx \|\mL\|_2$.

We now show an integral result: the spectral norm of $\mH^{\mathrm{lin}}_{\vx\vx}$ is directly proportional to the spectral norm of the weight matrix $\mW$ whereas the spectral norm of $\mH^{\mathrm{sn}}_{\vx\vx}$ is independent of the spectral norm of the weight matrix. In the following, we assume $\mW = \mW_1 = \mW_2$ and $\mH_{\vy_1\vy_1}\,=\, \mH_{\vy_2\vy_2}\, := \,\mH_{\vy\vy}$.

Define $\widetilde{\mW} \; := \; \frac{\mW}{\alpha}$ where $\alpha \; := \; \|\mW\|_2$. Consequently, $\|\widetilde{\mW}\|_2 = 1$ and $\mW = \alpha\widetilde{\mW}$. Now, treating $\mH^{\mathrm{lin}}_{\vx\vx}$ and $\mH^{\mathrm{sn}}_{\vx\vx}$ as functions of $\alpha$, it follows that:

\begin{equation}
\label{eq:lin-alpha}
\|\mH^{\mathrm{lin}}_{\vx\vx}(\alpha)\|_2 = \|\mW_1^\top \mH_{\vy_1\vy_1}\mW_1\|_2 = \alpha^2 \|\widetilde{\mW}^\top \mH_{\vy\vy} \widetilde{\mW}\|_2
\end{equation}

and

\begin{equation}
\label{eq:sn-alpha}
\|\mH^{\mathrm{sn}}_{\vx\vx}(\alpha)\|_2 \;\approx\; \|\mL(\alpha)\|_2 = \frac{d}{\|\mW \vx\|_2^2}\,
\|\mW^{\top}\mP\,\mD\,\mH_{\vy\vy}\,\mD\,\mP\,\mW \|_2 = 
\frac{d}{\|\widetilde{\mW} \vx\|_2^2}\,
 \|\widetilde{\mW}^{\top}\,\mP\,\mH_{\vy\vy}\,\mP\,\widetilde{\mW} \|_2
\end{equation}

Note that $\mP=\mI-\vu\vu^\top$ is independent of $\alpha$:
$$
\vu(\alpha)
=\frac{\mW\vx}{\|\mW\vx\|_2}
=\frac{\alpha\widetilde{\mW}\vx}{\|\alpha\widetilde{\mW}\vx\|_2}
=\frac{\widetilde{\mW}\vx}{\|\widetilde{\mW}\vx\|_2},
\qquad
\Longrightarrow\qquad
\mP(\alpha)=\mI-\vu(\alpha)\vu(\alpha)^\top=\mP(1).
$$
Therefore, $\|\mH^{\mathrm{lin}}_{\vx\vx}(\alpha)\|_2$ depends quadratically on $\alpha$ whereas $\|\mH^{\mathrm{sn}}_{\vx\vx}(\alpha)\|_2$ is \emph{scale-invariant} in $\alpha$.

Next, we compare the magnitudes of the two Hessians. First, recall the definition of $\kappa$ provided in \autoref{theorem:theorem_1}:
$$
\kappa
\; := \; \frac{\sqrt d}{\|\mW\vx\|_2}\|{\mW}\|_2
=
\frac{\sqrt d}{\alpha\|\widetilde{\mW}\vx\|_2}\alpha\|{\widetilde{\mW}}\|_2 = 
\frac{\sqrt d}{\|\widetilde{\mW}\vx\|_2}
$$

By definition, $\kappa^2 = \frac{d}{\|\widetilde{\mW}\vx\|_2^2}$, and \autoref{theorem:theorem_1} implies $\kappa^2 = \Theta(1)$ w.h.p.

Moreover, since $\|\widetilde{\mW}\|_2=\|\mP\|_2=1$,
\begin{equation}
\label{eq:sn-upper}
\|\mH^{\mathrm{sn}}_{\vx\vx}(\alpha)\|_2 \approx \|\mL\|_2 = \kappa^2 \, \|\widetilde{\mW}^\top \mP\,\mH_{\vy\vy}\,\mP\,\widetilde{\mW}\|_2 \le \kappa^2 \, \|\mH_{\vy\vy}\|_2
\end{equation}

For the linear module, there exists a constant $c_{\mathrm{lin}}>0$ such that

$$
\|\mH^{\mathrm{lin}}_{\vx\vx}(\alpha)\|_2 = \alpha^2\|\widetilde{\mW}^\top \mH_{\vy\vy}\widetilde{\mW}\|_2
\ge \alpha^2 c_{\mathrm{lin}}\,\|\mH_{\vy\vy}\|_2.
$$

We assume $\mathrm{Range}(\widetilde{\mW})$ is not adversarially aligned with the leading eigenspace of $\mH_{\vy\vy}$; in particular the overlap is constant-order, so the visibility constant satisfies $c_{\mathrm{lin}}=\Theta(1)$ and does not decay with $d$.

Combining all the aforementioned results, the final ratio between the two Hessians satisfies:
$$
\frac{\|\mH^{\mathrm{lin}}_{\vx\vx}(\alpha)\|_2}{\|\mH^{\mathrm{sn}}_{\vx\vx}(\alpha)\|_2}
\;\ge\;
\frac{\alpha^2 \|\widetilde{\mW}^\top \mH_{\vy\vy} \widetilde{\mW}\|_2}{\kappa^2\,\|\mH_{\vy\vy}\ \|_2} \ge \alpha^2\frac{c_{\text{lin}}}{\kappa^2}
$$

Empirically, $\alpha = \|\mW\|_2$ typically grows to tens or hundreds during training,
so $\alpha^2c_{\text{lin}} \gg \kappa^2$ (recall $\kappa=\Theta(1)$ w.h.p.). Hence:

$$
\|\mH^{\mathrm{lin}}_{\vx\vx}\|_2 \gg \|\mH^{\mathrm{sn}}_{\vx\vx}\|_2
\qquad \text{(with high probability)}.
$$

In summary, in non-pathological cases, the gradient Lipschitz constant of $\|\mH^{\mathrm{lin}}_{\vx\vx}\|_2$ is much larger than  the gradient Lipschitz constant of $\|\mH^{\mathrm{sn}}_{\vx\vx}\|_2$.

This completes the proof of \autoref{theorem:theorem_2}. 

\end{proof}

\section{Detailed Experimental Settings}
\label{appendix:exp_settings}
Our \textbf{SimpleGPT} models are built on Llama2, Llama3, and nanoGPT (a GPT-2 implementation). We apply the SimpleNorm operator to all Transformer blocks except the embedding layer and classification layer.
Our implementations are based on the Adam-mini~\footnote{\url{https://github.com/zyushun/Adam-mini}}~\cite{adam_mini_zhang2024adam} and nanoGPT~\footnote{\url{https://github.com/karpathy/nanoGPT}}~\cite{nanogpt_Karpathy2022}. Below we briefly describe the main architectural and training settings for each backbone. 

\textbf{nanoGPT} is a lightweight and efficient implementation of the GPT-2 architecture. It uses the GELU activation function and a byte-pair encoding (BPE) tokenizer~\cite{bpe_gage1994new} consistent with GPT-2~\cite{gpt2_radford2019language}, with an expanded vocabulary size of 50{,}257 tokens. 2{,}000 steps are used for learning rate warmup. Our training data on nanoGPT models is OpenWebText.

\textbf{Llama2} adopts the SwiGLU~\cite{swishglu_shazeer2020glu} activation function in the feed-forward networks, which improves expressivity and parameter efficiency. Positional information is encoded using Rotary Positional Embeddings (RoPE)~\cite{rope_su2023enhanced}. Llama2 also introduces Grouped-Query Attention (GQA) to reduce inference-time memory usage and computational cost. The model uses a SentencePiece-based BPE tokenizer with a vocabulary size of 32K tokens. In our experiments,  1\% of the total steps are allocated for learning rate warmup. Our training data on Llama2 models is C4.

\textbf{Llama3} follows the dense Transformer design of Llama2, while introducing several targeted changes. It continues to use GQA with eight key-value heads to improve decoding efficiency and reduce key-value cache size. A major difference lies in the tokenizer: Llama 3 adopts a significantly larger vocabulary of 128K tokens, combining tokens from the tiktoken tokenizer with additional multilingual tokens, which improves compression rates and language coverage. To better support long contexts, the RoPE base frequency is increased to 500{,}000. In our experiments,  1\% of the total steps are allocated for learning rate warmup. Our training data on Llama3 models is C4.

Across all experiments, we adopt the AdamW optimizer~\cite{adam_kingma2014adam, adamw_IlyaLoshchilov2018FixingWD} with $\beta_1 = 0.9$ and $\beta_2 = 0.95$. Since we know that weight decay is associated with the learning rate, and our method permits the use of larger learning rates, we accordingly adjust the weight decay. Unless otherwise stated, a weight decay value of 0.1 is used throughout our experiments. Additional hyperparameter configurations are summarized in \autoref{tab:model_configs}.

All models are trained using PyTorch~\citep{pytorch_paszke2019pytorch} with bfloat16 precision on A800 GPUs. We employ a cosine learning rate schedule for all training runs.

\section{Parameters and configurations of SimpleGPT}\label{appendix:model_configs}

\begin{table}[h]
\centering
\caption{Model configurations for different scales of SimpleGPT. The models 1B and 7B are based on Llama 2, the model 8B is based on Llama 3, and the model 1.4B is based on nanoGPT.}
\label{tab:model_configs}
\begin{tabular}{l|cccc}
\toprule
 & SimpleGPT \textbf{1B} & SimpleGPT \textbf{7B} & SimpleGPT \textbf{8B} & SimpleGPT \textbf{1.4B}\\
\midrule
Origin from & Llama2 & Llama2 & Llama3 & nanoGPT(GPT2)\\
Layers & 18 & 32 & 32 & 48\\
Model Dimension & 2{,}048 & 4{,}096 & 4{,}096 & 1{,}536 \\
FFN Dimension & 5{,}632 & 11{,}008 & 14{,}336 & 6{,}144 \\
Attention Heads & 16 & 32 & 32 & 24\\
Key / Value Heads & 16 & 32 & 8 & 24  \\

Activation Function & SwiGLU & SwiGLU&  SwiGLU & GeLU\\
Vocabulary Size & 32{,}000 & 32{,}000 & 128{,}000 & 50,304 \\
Positional Embeddings (RoPE) &  $\theta = 10{,}000$ & $\theta = 10{,}000$ & $\theta = 500{,}000$ & No\\
Batch Size & $512\times 256$ & $2048\times 192$ & $2048\times 192$ & $1024\times 512$\\
Training Steps & 200K & 20K/40K/60K & 20K  & 100K\\
Warmup Steps & 1\% & 1\% & 1\% & 2000\\
\bottomrule
\end{tabular}
\end{table}

\newpage 
\section{More experiments on SimpleGPT 7B}
Furthermore, we evaluate the SimpleGPT 7B  models using different learning rates or weight decay values. Results are shown in~\autoref{fig:simplegpt_7b_smaller_lr} and~\autoref{fig:simplegpt_7b_larger_lr}.

\begin{figure}[h!]
    \centering
    \includegraphics[width=0.6\linewidth]{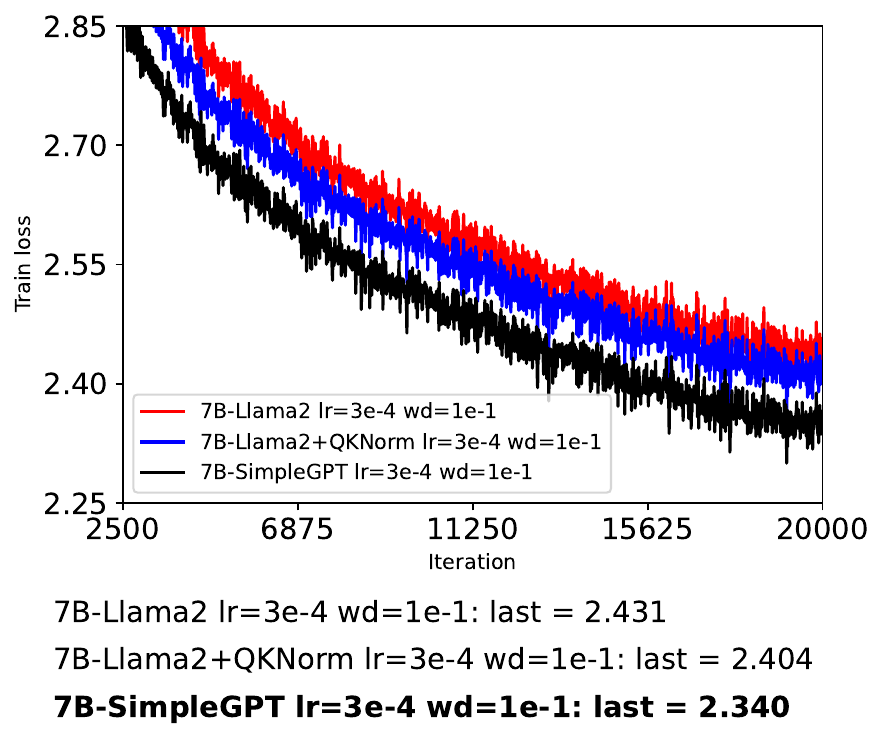}
    \caption{The training loss curves of Llama2 7B, Llama2 7B with QKNorm and SimpleGPT 7B with learning rate 3e-4 and weight decay 0.1.}
    \label{fig:simplegpt_7b_smaller_lr}
\end{figure}

\begin{figure}[h!]
    \centering
    \includegraphics[width=0.6\linewidth]{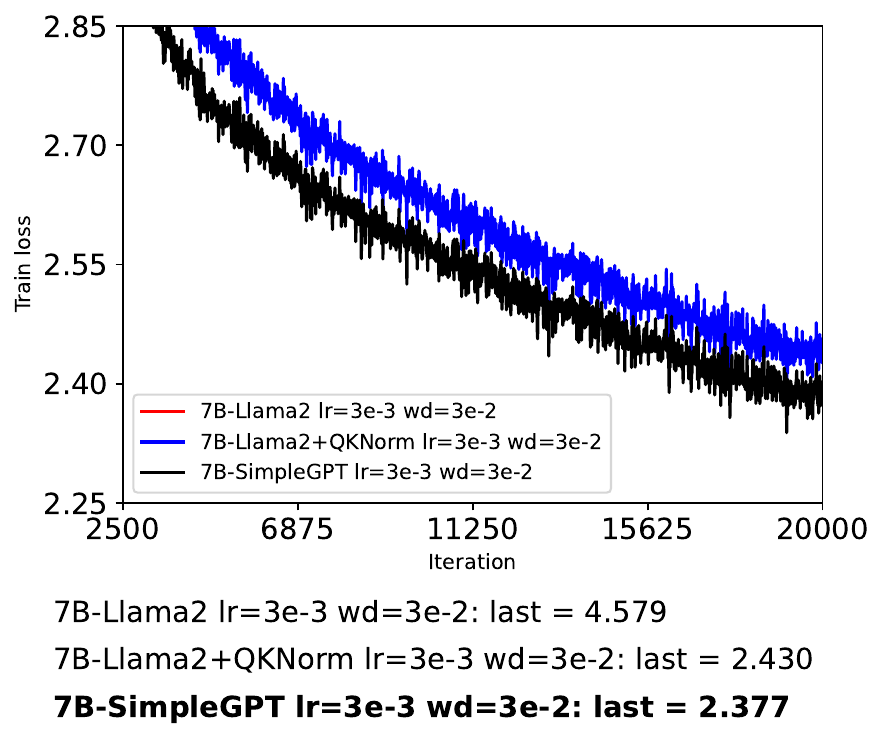}
    \caption{The training loss curves of Llama2 7B, Llama2 7B with QKNorm and SimpleGPT 7B with learning rate 3e-3 and weight decay 0.03.}
    \label{fig:simplegpt_7b_larger_lr}
\end{figure}

\ 

\ 

\ 

\section{More experiments on SimpleGPT 8B}
Furthermore, we evaluate the SimpleGPT 8B  models using different learning rates or weight decay values. Results are shown in~\autoref{fig:simplegpt_8b_smaller_lr} and~\autoref{fig:simplegpt_8b_large_lr_small_wd}.

\begin{figure}[h!]
    \centering
    \includegraphics[width=0.6\linewidth]{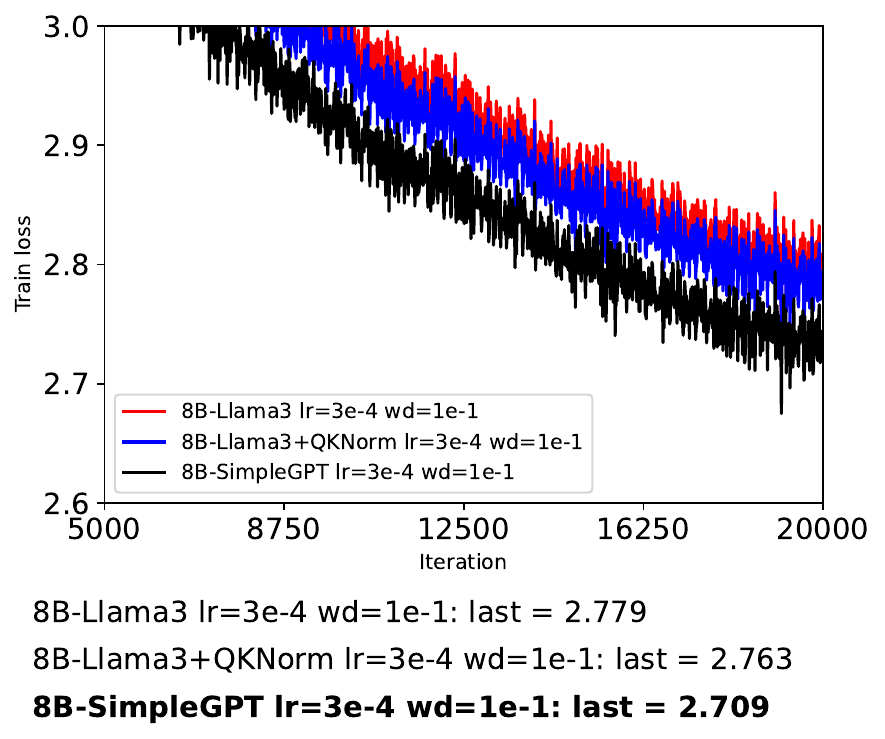}
    \caption{The training loss curves of Llama3 8B, Llama3 8B with QKNorm and SimpleGPT 8B with learning rate 3e-4 and weight decay 0.1.}
    \label{fig:simplegpt_8b_smaller_lr}
\end{figure}

\begin{figure}[h!]
    \centering
    \includegraphics[width=0.6\linewidth]{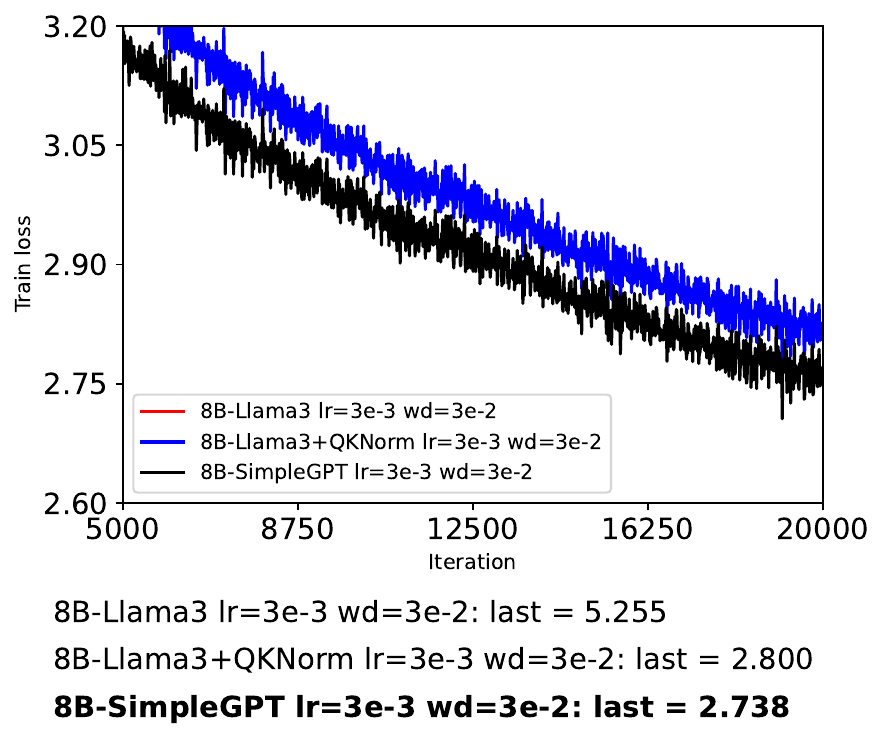}
    \caption{The training loss curves of Llama3 8B, Llama3 8B with QKNorm and SimpleGPT 8B with learning rate 3e-3 and weight decay 0.03.}
    \label{fig:simplegpt_8b_large_lr_small_wd}
\end{figure}

\newpage

\section{More experiments on weight decays}
 We conduct experiments on the SimpleGPT 8B model using two different weight decay values to evaluate robustness to regularization.
Across all tested settings, SimpleGPT 8B consistently outperforms LLaMA2 8B with QKNorm.
These results indicate that the benefits of SimpleNorm are not sensitive to the choice of weight decay, further demonstrating its robustness in large-scale training. Results are shown in~\autoref{fig:llama2_8B_weight_decay}.

\begin{figure}[H]
    \centering
    \begin{subfigure}{0.44\textwidth}
        \centering
        \includegraphics[width=\linewidth]{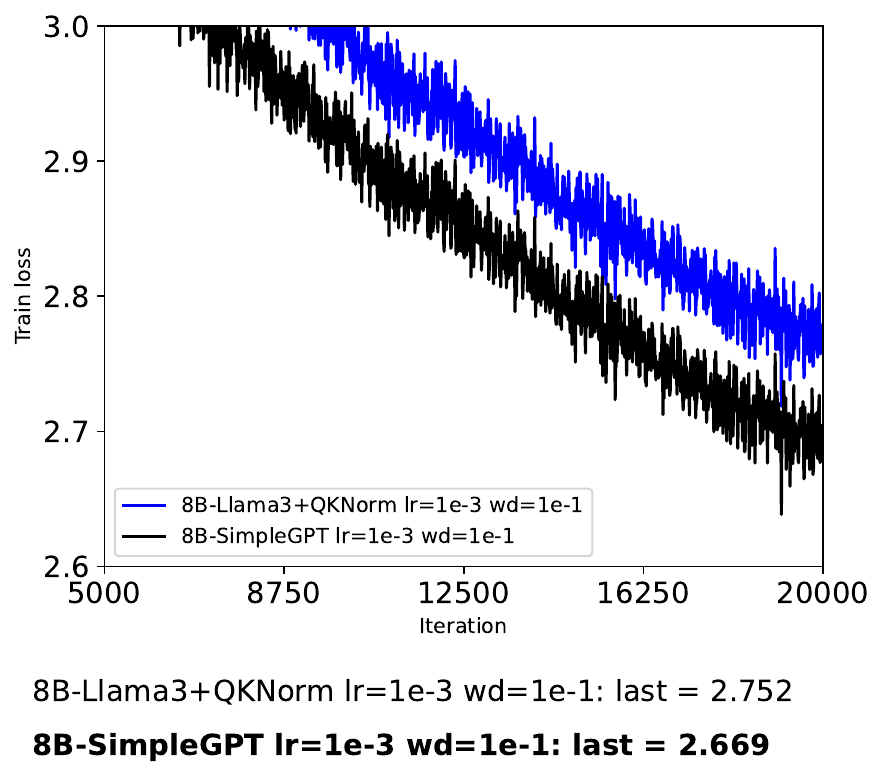}
        \caption{8B with wd=0.1.}
        \label{fig:simplegpt_8b_wd01}
    \end{subfigure}
    \begin{subfigure}{0.48\textwidth}
        \centering
        \includegraphics[width=\linewidth]{Figures/simplegpt/big_model/8B/Llama3_8B_lr1e_3_wd0.05_20K_smooth.pdf}
        \caption{8B with wd=0.05.}
        \label{fig:simplegpt_8b_wd005}
    \end{subfigure}
    \caption{
        Overall comparison across Llama3 8B with QKNorm and SimpleGPT 1B under two different weight decay values.
    }
    \label{fig:llama2_8B_weight_decay}
\end{figure}

\ 

\ 

\

\end{document}